\documentclass[11pt,a4paper]{article}
\usepackage[margin=0.75in]{geometry}
\usepackage[utf8]{inputenc}
\usepackage{cmap}
\usepackage[T1]{fontenc}
\usepackage{lmodern}
\usepackage{natbib}
\usepackage[colorlinks=true]{hyperref}
\usepackage{amssymb, amsthm, amsmath, amsfonts}
\usepackage{algorithm}
\usepackage{algpseudocode}
\usepackage{authblk, dsfont, mathrsfs, color, bm, enumitem, mathtools}
\usepackage{subfigure, graphicx, transparent}
\usepackage{dirtytalk}
\graphicspath{{figures/}}
\usepackage[dvipsnames]{xcolor}
\usepackage{comment}
\usepackage{tabularx}
\usepackage{booktabs}
\hypersetup{colorlinks, linkcolor={red!80!black}, citecolor={blue!80!black}, urlcolor={ForestGreen}}
\usepackage{multirow}
\usepackage[capitalize,noabbrev]{cleveref}
\usepackage{hyperref}

\theoremstyle{plain}
\newtheorem{theorem}{Theorem}[section]
\newtheorem{proposition}[theorem]{Proposition}

\theoremstyle{definition}

\newtheorem{assumption}[theorem]{Assumption}
\newtheorem{remark}[theorem]{Remark}

\usepackage{algorithm}
\usepackage{algpseudocode}
\usepackage{bbm}
\usepackage{amsthm}
\usepackage{booktabs} 
\usepackage{mathtools}
\usepackage{multirow}
\usepackage{amsmath}
\theoremstyle{plain} 
\usepackage{dsfont}
\usepackage[capitalize,noabbrev]{cleveref}

\DeclareMathOperator*{\argmin}{arg\,min}

\DeclareUnicodeCharacter{211D}{\ensuremath{\mathbb R}}

\newcommand{\rb}{\mathbb{R}}

\def \ds{\displaystyle}

\def \Vol{ {\operatorname{Vol} } }
\def \E{{\mathbb E}}
\def \P{{\mathbb P}}

\usepackage{pifont}

\usepackage[page]{appendix}

\makeatletter
\let\oldappendix\appendices

\renewcommand{\appendices}{%
  \clearpage
  \RestoreAddContentsLine 
  \renewcommand{\thesection}{\Roman{section}}
  \let\tf@toc\tf@app
  \addtocontents{app}{\protect\setcounter{tocdepth}{2}}
  \immediate\write\@auxout{%
    \string\let\string\tf@toc\string\tf@app^^J
  }
  \oldappendix
}%

\newcommand{\listofappendices}{%
  \begingroup
  \renewcommand{\contentsname}{\appendixtocname}
  \let\@oldstarttoc\@starttoc
  \def\@starttoc##1{\@oldstarttoc{app}}
  \tableofcontents
  \endgroup
}

\makeatother

\makeatletter
\newcommand{\RestoreAddContentsLine}{%
  \ifcsname hyper@anchor\endcsname
    \def\addcontentsline##1##2##3{%
      \addtocontents{##1}{%
        \protect\contentsline{##2}{##3}{\thepage}{\@currentHref}%
      }%
    }%
  \else
    \def\addcontentsline##1##2##3{%
      \addtocontents{##1}{%
        \protect\contentsline{##2}{##3}{\thepage}%
      }%
    }%
  \fi
}
\makeatother

\crefname{prop}{proposition}{propositions}
\Crefname{prop}{Proposition}{Propositions}
\crefname{ass}{assumption}{assumptions}
\Crefname{ass}{Assumption}{Assumptions}

\DeclarePairedDelimiterX{\inner}[2]{\langle}{\rangle}{#1, #2}

\title{Super-Level-Set Regression: Conditional Quantiles\\ via Volume Minimization}

\author[1]{Sacha Braun}
\author[$1, 2$]{Michael I. Jordan}
\author[1]{Francis Bach}
\affil[1]{Sierra team, Inria Paris, France \protect\\
\texttt{\{sacha.braun, francis.bach\}@inria.fr}}
\affil[2]{Departments of EECS and Statistics, UC Berkeley, USA \protect\\ \texttt{jordan@cs.berkeley.edu}}

\date{}

\begin{document}

\maketitle


\begin{abstract}

Constructing minimum-volume prediction regions that satisfy conditional coverage is a fundamental challenge in multivariate regression. Standard approaches rely on explicitly estimating the full conditional density and subsequently thresholding it. This two-step plug-in process is notoriously difficult, sensitive to estimation errors, and computationally expensive. One would like to instead optimize the region directly. Formulating a direct solution is challenging, however, because it requires minimizing a volume objective that is coupled with the conditional quantiles of the model's own estimation error. In this work, we  address this challenge. We introduce \emph{super-level-set regression} (SLS), a novel mathematical framework that successfully resolves this implicit coupling, allowing us to directly parameterize and optimize the geometric boundaries of the target conditional level sets. By bypassing full distribution estimation and leveraging flexible volume-preserving frontier functions, our approach natively captures complex, multimodal, and disjoint conditional structures end-to-end. Ultimately, SLS offers a new perspective on multivariate conditional quantile regression, replacing the restrictive assumptions of density-first methods with a direct geometric optimization strategy.

\end{abstract}


\section{Introduction}

In many statistical and machine learning applications, accurately characterizing the conditional distribution of a multivariate response variable, $Y \in \mathcal{Y} \subseteq \mathbb{R}^d$, given a feature vector $X \in \mathcal{X}$, is essential. However, in practice, learning the full, often highly complex, conditional density is both statistically challenging and computationally infeasible. For many decision-making tasks, we do not need the entire distribution; rather, it suffices to identify the highest density regions that capture a specific probability mass \citep{polonik1995measuring, izbicki2022cd}. These regions are formally known as conditional level sets. Traditionally, extracting these sets relies on a two-step plug-in approach: first estimating the global conditional density, and subsequently thresholding it \citep{hyndman1996computing}. This indirect route forces the model to expend representational capacity on areas of the space that are completely irrelevant to the target quantile. On the other hand, most existing direct estimation strategies are heavily constrained, requiring strong prior assumptions about the shape, location, or topology of the underlying data \citep{romano2019conformalized, feldman2023calibrated}.

In this work, we introduce \textit{super-level-set regression} (SLS), a novel framework designed to directly parameterize and optimize the boundaries of conditional level sets. Our approach represents a fundamental shift in perspective: rather than attempting to learn the full distribution, our method focuses exclusively and natively on the specific quantile of interest (see \Cref{fig:illustrative}). By optimizing the boundary directly, we provide an adaptive methodology that dynamically adjusts to the feature vector~$X$, allowing for highly flexible, covariate-dependent conditional level sets.

A key aspect of our method is the utilization of the region's volume as a direct optimization criterion. In our framework, the volume acts as a mechanism to naturally regularize the geometry of the prediction set, allowing us to bypass the rigid prior assumptions required by classical methods. By penalizing the volume of the set while enforcing a quantile constraint, these sets naturally align with the highest-density region of the data. Through the strategic design of the frontiers of these sets, our approach is capable of estimating level sets with highly intricate geometric structures. This includes handling strongly multimodal distributions, non-convex boundaries, and even complex shapes with internal voids or holes, while conditioning on $X$.

This approach of directly optimizing the region's volume poses an interesting challenge: it requires minimizing an objective that depends on a conditional quantile implicitly coupled with the model's own predictions. To optimize, one must differentiate through this instance-dependent quantile. While a variety of strategies exist for quantile-constrained optimization, such as differentiable sorting algorithms \citep{cuturi2019differentiable, blondel2020fast}, soft Lagrangian penalties \citep{cotter2019optimization}, or randomized smoothing \citep{berthet2020learning, grover2019stochastic}, these techniques are designed for marginal or batch-level quantiles. Adapting them to differentiate through \emph{conditional} quantiles that vary dynamically with the covariates $X$ is notoriously difficult, generally forcing a relaxation of the constraint into a weaker marginal guarantee \citep{braun2025minimum}. Another potential alternative is minimizing the conditional value-at-risk (CVaR) \citep{rockafellar2000optimization}; however, CVaR focuses solely on minimizing the expected loss of the tail. Consequently, it cannot be used to directly minimize arbitrary functions of the target quantile itself, which is exactly what our framework requires. 

As we will show, it is possible to circumvent this implicit coupling through a novel surrogate objective that averages the volume functional over a shrinking probability neighborhood, preserving the differentiability of the objective and capturing the conditional nature of the constraint, modulo finite-sample approximations.

\begin{figure}[t!]
    \center
\subfigure{\includegraphics[width=0.48\columnwidth]{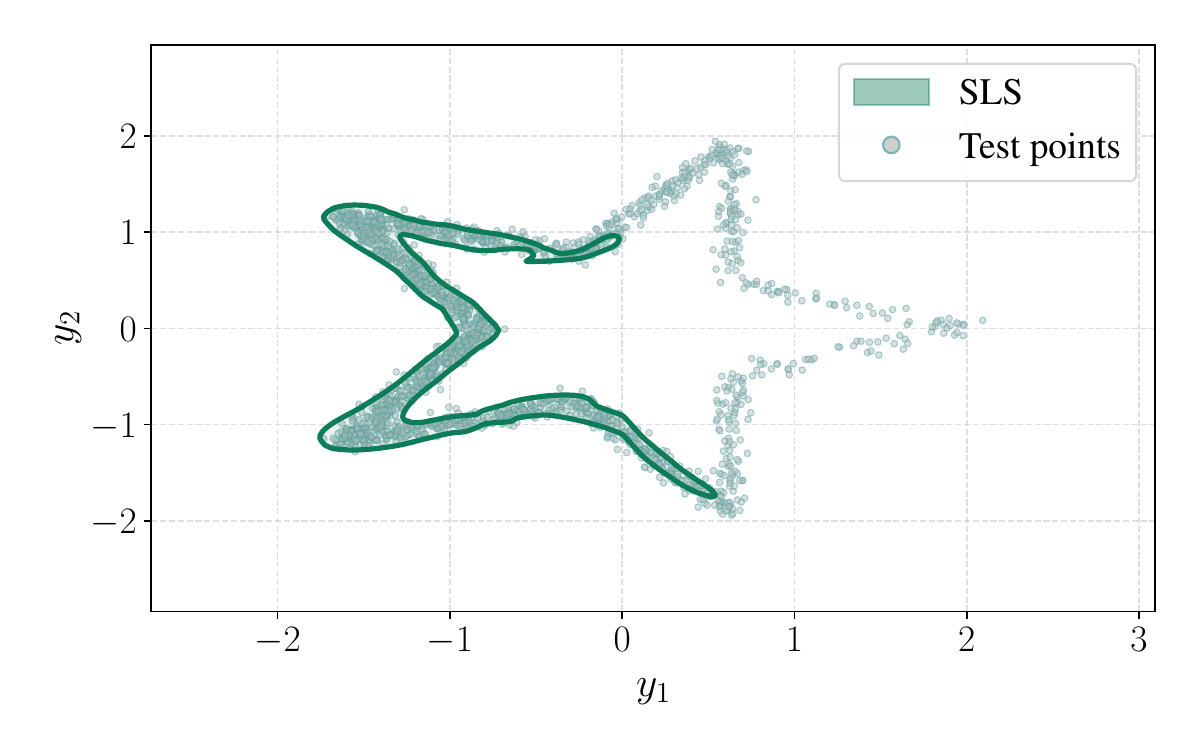}} ~
\subfigure{\includegraphics[width=0.48\columnwidth]{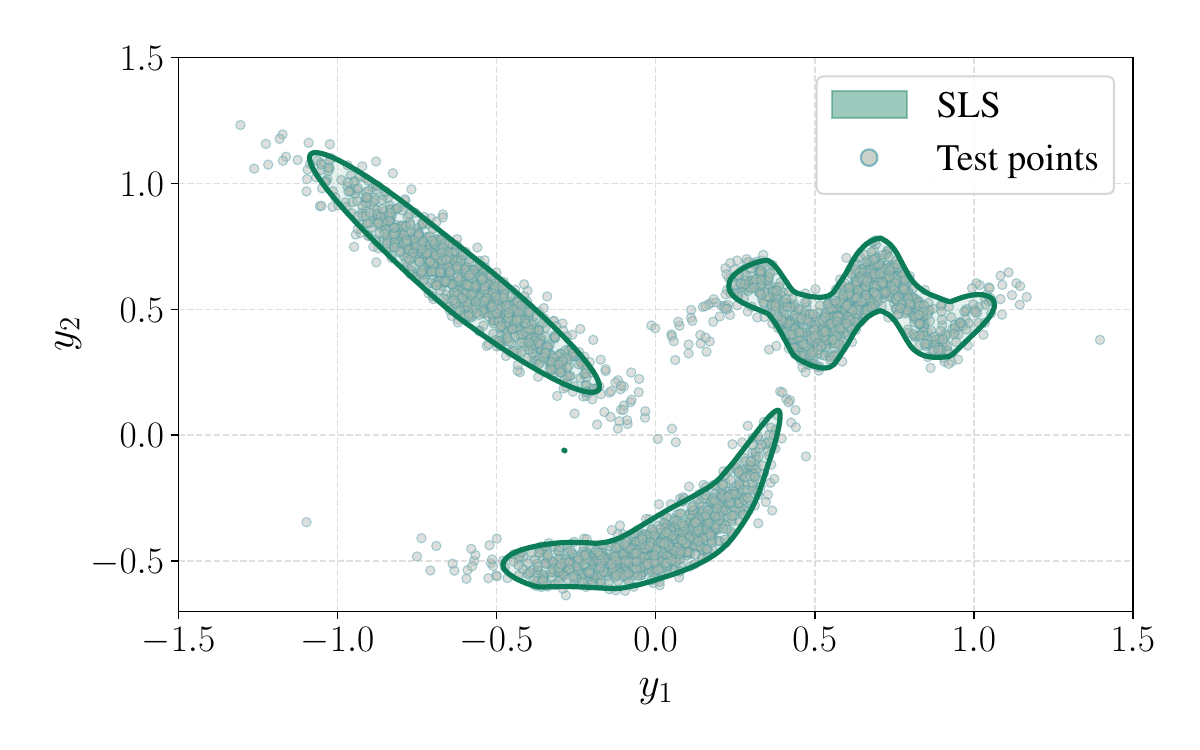}} 
    \vspace{-5mm}
    \caption{\textbf{SLS regression on synthetic 2D distributions}, sampling from $\P_{Y|X}$ for fixed $X$. \textbf{Left:} A single flow-based Mahalanobis frontier capturing an asymmetric, star-shaped density. Target and empirical coverage are both $70\%$. \textbf{Right:} A union of four flow components seamlessly adapting to a disjoint, three-mode distribution. The model successfully allocates mass despite the structural mismatch of components to modes, achieving an exact empirical coverage of $90\%$.}
    \label{fig:illustrative}
\end{figure}

\paragraph{Summary of contributions. }
Our main contributions are as follows:
\begin{itemize}[itemsep=0mm, topsep=0mm, leftmargin=5mm]
    \item We introduce a general optimization framework capable of minimizing objective functions that are implicitly coupled with the conditional quantiles of the model's own predictions (Proposition~\ref{prop:surrogate_convergence}).
    \item We propose \emph{SLS regression}, a novel approach that focuses exclusively on learning specific conditional level sets. This  bypasses the intermediate step of estimating the full conditional distribution and removes the need for restrictive prior assumptions about the data (\Cref{sec:level:sets:regression}).
    \item We provide an end-to-end framework where the geometric boundary of the level set is directly parameterized and optimized, offering a fundamentally new perspective on multivariate conditional quantile regression (\Cref{sec:frontier:examples}).
    \item We demonstrate that by carefully designing flexible parametric frontier functions, our method becomes highly adaptive to the feature vector $X$, capable of capturing complex conditional structures, including disjoint and multimodal shapes (\Cref{sec:experiments}).
\end{itemize}

\section{Related work}
\label{sec:related_work}

\paragraph{Highest density regions and predictive sets.}
The problem of constructing minimum-volume regions achieving a target coverage is solved in theory by the highest density region (HDR) \citep{hyndman1996computing, scott2005learning}. In the context of distribution-free regression, this is often formulated as finding highest predictive density (HPD) sets \citep{izbicki2022cd}. Classical methods for extracting HDRs or HPD sets typically follow a plug-in approach: they first train a model to approximate the full conditional density $p(y \mid X)$ and subsequently threshold this density to form regions \citep{dalmasso2020confidence, izbicki2019flexible, camehl2025superlevel, deliu2026alternative}. Normalizing flows \citep{dinh2014nice, papamakarios2021normalizing} are well suited for modeling complex distributions. In regression, conditional normalizing flows are predominantly trained via maximum likelihood to perform full conditional density estimation \citep{dinh2014nice, trippe2018conditional, english2025japan}, after which HDRs can be extracted post-hoc. While statistically sound, learning the entire conditional distribution is notoriously difficult to scale and calibrate \citep{scott2011multivariate}, often leading to computationally expensive and statistically inefficient prediction sets. Given a complex conditional density estimate, it is difficult to find the optimal conditional threshold yielding the HDR \citep{izbicki2022cd}. This often requires Monte Carlo sampling, and an increasing body of work has sought to find ways to compute these sets efficiently, or find distributions for which HDR can be computed in closed form \citep{wang2023probabilistic, plassier2024probabilistic, dheur2025unified, braun2025multivariate, dheur2025multivariate} or to rectify its levels post-hoc \citep{plassier2025rectifying}.

\paragraph{Multivariate quantiles and super-level set.}
Extending quantile regression to multivariate responses $Y \in \mathbb{R}^d$ is fundamentally challenging because there is no natural ordering in multiple dimensions. Optimal transport strategies have also emerged as go-to tools to rank multivariate responses \citep{thurin2025optimal, klein2025multivariate, ndiaye2025beyond}, but they fail to go beyond clustering for conditional coverage. Other existing approaches, such as directional quantiles or center-outward quantiles \citep{hallin2017multiple, del2024nonparametric}, provide geometrically appealing contours but often fail to guarantee that the enclosed regions correspond to the highest concentration of probability mass, especially for feature-dependent sets. 

\paragraph{Bypassing conditional density estimation.}
To circumvent the challenges of explicit conditional density estimation, a growing body of work attempts to learn prediction sets directly. This shift has been driven by the growing interest in conformal prediction \citep{papadopoulos2002inductive, vovk2005algorithmic, shafer2008tutorial}, which allows us to build confidence sets with marginal guarantees. For univariate responses, conformalized quantile regression (CQR) \citep{romano2019conformalized} and its variants directly estimate lower and upper bounds using the pinball loss. While highly efficient, CQR fundamentally enforces equal-tailed intervals. When the underlying distribution is skewed or multimodal, this restriction displaces the interval away from the true highest density regions, leading to unnecessarily wide and conservative sets. Recent advances, such as conformal thresholded intervals \citep{luo2025conformal}, attempt to threshold interquantile lengths to dynamically mimic local density. Quantile regression has also been adapted for multivariate settings, such as in the work of \citet{zhou2024conformalized} which produces hyper-rectangles, or that of \citet{feldman2023calibrated}, which requires learning multiple directional quantiles. Other methods frame region construction as a direct coverage allocation optimization problem \citep{sadinle2019least}. More recently, \citet{bach2025convex} derived a loss function aimed at minimizing set size while maintaining conditional coverage. However, its objective optimizes a global average across all possible coverage levels, meaning it cannot optimize the size for a specific quantile of interest. Closer to our minimum-volume approach, \citet{bars2025volume} focus on interval length minimization for univariate responses, and \citet{braun2025minimum} handle more complex forms induced by $p$-norms in the multivariate setting. However, neither of these works can handle multimodality and, more importantly, they relax the problem to one of marginal coverage. Our work addresses both of these limitations by explicitly targeting conditional coverage, while acknowledging that exact distribution-free guarantees for conditional coverage are unattainable and thus can only be approximated in practice.

\section{Super-level-set regression}
\label{sec:level:sets:regression}

Let $X \in \mathcal{X}$ denote a vector of covariates and let $Y \in \mathcal{Y} \subseteq \mathbb{R}^d$ be a continuous response variable. We assume that $(X,Y) \sim \P_{X,Y}$, where the joint distribution $\P_{X,Y}$ is unknown. Our objective is to construct conditional (super)-level sets for the density $\P_{Y|X}$, given $m$ independent and identically distributed (i.i.d.) samples, $(X_i, Y_i) \sim \P_{X,Y}$.

Assume that the conditional distribution $\mathbb{P}_{Y|X}$ admits a density $p(y \mid X)$ with respect to the Lebesgue measure. For a given probability level $\tau \in (0,1)$, the conditional super-level set is defined as
\begin{equation}
    \mathcal{A}(X) = \left\{ y \in \mathcal{Y} : p(y \mid X) \ge t_\tau(X) \right\},
\end{equation}
where $t_\tau(X)$ is the largest threshold satisfying
\begin{equation}
    \mathbb{P}\!\left(Y \in \mathcal{A}(X) \mid X\right) \ge \tau.
\end{equation}
Our goal is to learn the mapping $\mathcal{A}(\cdot)$. In practice, the true conditional density $p(y \mid X)$ is unknown and notoriously difficult to estimate with high fidelity \citep{scott2011multivariate}. Traditional strategies rely on negative log-likelihood minimization, but this approach generally requires strong prior assumptions and focuses on learning the entire distribution rather than a specific shape tailored for the target level~$\tau$.

The theoretical minimum-volume region achieving at least $\tau$ coverage, also known as the highest density region (HDR), can be shown by standard measure-theoretic arguments to be precisely the super-level set of the true conditional distribution, up to null sets~\citep{scott2005learning}. For every $X$, the region $\mathcal{A}(X)$ is the solution to the following constrained optimization problem, where $\Vol(\cdot)$ denotes Lebesgue measure:
\begin{equation}
\label{eq:min:vol:hdp}
    \inf_{\mathcal{B}(X)\subset\mathcal{Y}} \Vol(\mathcal{B}(X)) \quad \text{s.t.} \quad \P(Y\in\mathcal{B}(X)|X) \geq \tau\,.
\end{equation}
We propose bypassing explicit density estimation to instead search over a hypothesis space of real-valued functions $\mathcal{G} \subset \{G : \mathcal{X} \times \mathcal{Y} \to \mathbb{R}\}$, which we use to parameterize the level sets. We define a candidate confidence region as the sub-level set of one such function:
\begin{equation}
\label{eq:set:definition}
    \mathcal{C}_{G, q}(X) = \left\{ y \in \mathcal{Y} : G(X, y) \leq q(X) \right\}.
\end{equation}

In the following, we refer to a function $G \in \mathcal{G}$ as a \emph{frontier function}. Given a frontier function $G$, we denote by $\Vol_G(t, X)$ the volume of the set $\{y \in \mathcal{Y} : G(X,y) \leq t\}$ for a threshold $t > 0$.

To strictly satisfy the conditional validity constraint, the boundary threshold $q(X)$ must align with the conditional $\tau$-quantile of the frontier distribution, i.e., $q(X) \geq \text{Quantile}_\tau(\mathbb{P}_{G(X,Y) \mid X})$.

The problem of finding the optimal confidence region can thus be cast as minimizing the expected volume over the covariates:
\begin{equation} \label{eq:inf_problem}
    \inf_{G \in \mathcal{G}, q:\mathcal{X}\to\rb} \mathbb{E}_{X}\Big[ \Vol_G(q(X), X) \Big] \quad \text{s.t.} \quad q(X) \geq \text{Quantile}_\tau(\mathbb{P}_{G(X,Y) \mid X}) \quad \mathbb{P}_X\text{-a.s.}
\end{equation}
If the function class $\mathcal{G}$ is sufficiently expressive, solving \Cref{eq:inf_problem} exactly recovers the true super-level set $\mathcal{A}(X)$. Crucially, compared to conditional density estimation approaches that typically require restrictive structural assumptions about the data, the only implicit assumption we make is that the true super-level sets can be expressed in the form of \Cref{eq:set:definition} for some function $G \in \mathcal{G}$. For instance, with the simple choice $\mathcal{G} = \left\{ (x, y) \mapsto \|y - f(x)\|_2 \mid f: \mathcal{X} \to \rb^d \right\}$, we can only recover spherical level sets. More expressive frontier functions and a detailed analysis of this property are provided in \Cref{sec:frontier:examples}.

\subsection{Learning strategy}

Solving \Cref{eq:inf_problem} directly is intractable: the volume functional is generally non-differentiable with respect to $G$, and the quantile constraint $q(X) \geq \operatorname{Quantile}_\tau(\P_{G(X,Y) \mid X})$ implicitly couples $q$ and $G$. While the first challenge can be addressed by carefully designing the frontier set $\mathcal{G}$, the implicit link between $q$ and $G$ is harder to resolve. 

This problem is a bi-level optimization problem, where the learned parameters are $G(\cdot, \cdot)$ and $q(\cdot)$. The main issue is that for a fixed $q(\cdot)$, \Cref{eq:inf_problem} tends to reduce the volume uniformly across all samples. One reason for this behavior is that the samples $Y$ are only used to enforce the quantile constraint but do not regularize the volume minimization itself, which remains unconstrained. This leads to degenerate solutions, with the volume collapsing to zero.

To overcome the challenges of optimizing \Cref{eq:inf_problem} directly, we construct a surrogate objective that bypasses the strict quantile constraint by averaging the volume functional over a shrinking probability neighborhood around the $\tau$-quantile. The intuition is that when $G(X,Y) \approx q(X)$, we can replace $q(X)$ with $G(X,Y)$ in \Cref{eq:inf_problem}. 

To formalize the condition $G(X,Y) \approx q(X)$, we introduce a shrinking window around the target quantile. Let $\phi(n)$ and $\psi(n)$ be two positive sequences such that $\phi(n) \to 0$ and $\psi(n) \to 0$ as $n \to +\infty$, where $n$ denotes the number of training steps. We define the quantile neighborhood indicator $K_n(X, g)$ as $K_n(X, g) = \mathbbm{1}\left\{q_{\tau-\phi(n)}(X) \leq g \leq q_{\tau+\psi(n)}(X)\right\}$, where $q_\beta(X) = \operatorname{Quantile}_\beta(\P_{G(X,Y) \mid X})$ is the conditional $\beta$-quantile of the given frontier function $G$. By the law of total expectation and applying a change of variables to the probability level $\beta = F_{G|X}(g)$, we can rewrite the expectation of the unnormalized surrogate objective as:
\begin{equation}
\label{eq:quantile:interpretation}
    \mathbb{E}_{X,Y}\left[ K_n\big(X, G(X,Y)\big) \Vol_G\big(G(X, Y), X\big) \right] = \mathbb{E}_X\left[ \int_{\tau-\phi(n)}^{\tau+\psi(n)} \Vol_G\big(q_\beta(X), X\big) \, d\beta \right].
\end{equation}
This explicitly demonstrates that the surrogate objective performs a uniform volume minimization across all quantiles in the interval $[\tau-\phi(n), \tau+\psi(n)]$. Consequently, as the boundaries of this window shrink toward $\tau$, the surrogate increasingly minimizes the volume at the exact target quantile. The following proposition (proof in~\Cref{app:proofs}) makes this convergence formal.

\begin{proposition}[Convergence of the surrogate objective]
    \label{prop:surrogate_convergence}
    Consider the sequence of unconstrained optimization problems:
    \begin{equation}
        \label{eq:kn_reformulation}
        \inf_{G \in \mathcal{G}} J_n(G) \coloneqq \mathbb{E}_{X,Y}\left[ \frac{K_n\big(X, G(X,Y)\big)}{\psi(n)+\phi(n)} \Vol_G\big(G(X, Y), X\big) \right].
    \end{equation}
    Then, under some regularity assumptions (stated in Assumption~\ref{ass:regularity}), as $n \to +\infty$, the sequence of functionals $J_n(G)$ converges uniformly to the true objective $J(G) = \mathbb{E}_{X}\big[ \Vol_G(q_\tau(X), X) \big]$ over~$\mathcal{G}$. Furthermore, any limit point of a sequence of optimal solutions $G_n^* \in \argmin_{G \in \mathcal{G}} J_n(G)$ is an optimal solution to the constrained problem in \Cref{eq:inf_problem}.
\end{proposition}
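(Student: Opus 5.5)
The plan is to rest everything on the change-of-variables identity \eqref{eq:quantile:interpretation}. Dividing it by $\psi(n)+\phi(n)$ expresses $J_n(G)$ as an average over a window of levels:
\begin{equation*}
J_n(G) = \mathbb{E}_X\left[ \frac{1}{\psi(n)+\phi(n)} \int_{\tau-\phi(n)}^{\tau+\psi(n)} \Vol_G\big(q_\beta(X), X\big)\, d\beta \right].
\end{equation*}
This identity requires $G(X,Y)\mid X$ to have a continuous distribution function, so that $F_{G|X}(G(X,Y))$ is uniform on $[0,1]$ and $q_\beta$ is a genuine inverse. I will take this continuity to be part of Assumption~\ref{ass:regularity}. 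The target is $J(G)=\mathbb{E}_X[\Vol_G(q_\tau(X),X)]$, so we need to control
\begin{equation*}
|J_n(G)-J(G)| \le \mathbb{E}_X\left[ \frac{1}{\psi(n)+\phi(n)} \int_{\tau-\phi(n)}^{\tau+\psi(n)} \big|\Vol_G(q_\beta(X),X)-\Vol_G(q_\tau(X),X)\big|\, d\beta\right].
\end{equation*}
Uniform convergence over $\mathcal{G}$ then follows once the map $\beta\mapsto \Vol_G(q_\beta(X),X)$ has a modulus of continuity at $\tau$ that is uniform in $G$ and integrable in $X$.

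Concretely, I will assume, or extract from Assumption~\ref{ass:regularity}, a bound of the form
\begin{equation*}
|\Vol_G(q_\beta(X),X)-\Vol_G(q_\tau(X),X)|\le L(X)\,\omega(|\beta-\tau|),
\end{equation*}
with $\sup_{G\in\mathcal{G}}$ built into $L$, $\mathbb{E}[L(X)]<\infty$ and $\omega(0^+)=0$. One natural route is to factor through the threshold. Suppose $t\mapsto\Vol_G(t,X)$ is Lipschitz with constant $L_1(X)$ near $q_\tau(X)$. Suppose also that the conditional density of $G(X,Y)$ at its $\tau$-quantile is bounded below by $c(X)>0$ uniformly in $G$. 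Then the quantile function satisfies $|q_\beta-q_\tau|\le |\beta-\tau|/c(X)$ on a neighbourhood, which gives $L=L_1/c$ and $\omega(s)=s$. Plugging this in yields
\begin{equation*}
\sup_{G\in\mathcal{G}}|J_n(G)-J(G)|\le \mathbb{E}[L(X)]\,\omega\big(\max(\phi(n),\psi(n))\big)\to 0.
\end{equation*}
An alternative that avoids the Lipschitz constants is equicontinuity plus dominated convergence, using a dominating envelope $\sup_G \Vol_G(q_{\tau+\psi(1)}(X),X)$. This bound is available because $\Vol_G$ and $q_\beta$ are monotone, so $\Vol_G(q_\beta,X)\le \Vol_G(q_{\tau+\psi(1)},X)$ on the window.

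For the second claim I will use a standard $\Gamma$-convergence-type argument. Uniform convergence implies $\inf_{\mathcal{G}} J_n\to\inf_{\mathcal{G}} J$, and also $J(G_n^*)-\inf_{\mathcal{G}} J\to 0$, via the usual three-term split
\begin{equation*}
J(G_n^*)\le J_n(G_n^*)+\varepsilon_n\le J_n(G)+\varepsilon_n\le J(G)+2\varepsilon_n \quad \text{for all } G\in\mathcal{G}.
\end{equation*}
Let $\bar G$ be a limit point of $(G_n^*)$ in the topology on $\mathcal{G}$ fixed by the assumptions. Lower semicontinuity of $J$ in that topology, again part of the regularity assumption, gives
\begin{equation*}
J(\bar G)\le \liminf J(G_{n}^*)=\inf_{\mathcal{G}} J,
\end{equation*}
so $\bar G$ minimizes $J$. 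It remains to connect this with \eqref{eq:inf_problem}. Since $t\mapsto\Vol_G(t,X)$ is nondecreasing, for any fixed $G$ the optimal feasible $q$ is $q(X)=q_\tau(X)$. Hence the infimum in \eqref{eq:inf_problem} equals $\inf_{G} J(G)$, and $(\bar G,q_\tau^{\bar G})$ is an optimal solution of the constrained problem.

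The main obstacle is making the continuity of $\beta\mapsto\Vol_G(q_\beta(X),X)$ uniform over the whole class $\mathcal{G}$, with an integrable envelope. The argument for a single $G$ is routine, but uniformity needs a lower bound on the conditional density of $G(X,Y)$ near its quantile that holds for all $G$, together with control of volume growth in the threshold. The first thing I would try is to state exactly these two quantities inside Assumption~\ref{ass:regularity}, as a uniform Lipschitz property of the quantile-volume map, and then check that the degenerate collapsed frontiers are excluded by it.
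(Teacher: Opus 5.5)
Your proposal is correct and follows essentially the same route as the paper. The uniform-convergence step uses the same two ingredients, the Lipschitz bound in the threshold and the $1/c(X)$ Lipschitz bound on the conditional quantile, which give $\sup_{G\in\mathcal{G}}|J_n(G)-J(G)|\le \max(\phi(n),\psi(n))\,\mathbb{E}_X[L(X)/c(X)]$, and the second claim follows from the standard argmin-stability argument under uniform convergence; your version differs only in small improvements, since it needs only lower semicontinuity of $J$ and makes explicit, via monotonicity of $t\mapsto \Vol_G(t,X)$, why a minimizer of $J$ solves the constrained problem in \Cref{eq:inf_problem}, a step the paper leaves implicit.
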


This reformulation establishes that minimizing \Cref{eq:quantile:interpretation}, subject to a normalization constraint, is equivalent to the initial objective in \Cref{eq:inf_problem}. Crucially, this equivalence enables our learning procedure by naturally inducing an alternating optimization scheme. To learn the minimal-volume confidence set, we introduce two decay schedules, $\phi(\cdot)$ and $\psi(\cdot)$, and minimize the empirical analogue of \Cref{eq:quantile:interpretation}. We achieve this by alternately updating the frontier function $G$ while holding the quantile estimates $q$ fixed, and subsequently updating $q$ while holding $G$ fixed.

\begin{remark}
    \label{rem:generalization}
    Proposition~\ref{prop:surrogate_convergence} is not strictly tied to the volume functional $\Vol_G(t,X)$. The result naturally generalizes to any continuous objective function $h_G(t, X)$ that satisfies the regularity conditions outlined in Assumption~\ref{ass:regularity}. This point is further discussed in \Cref{app:subsec:function:of:quantile}. For instance, this framework can be used to learn a predictor $f:\mathcal{X} \to \mathbb{R}$ that minimizes the expected conditional median absolute deviation:
    \begin{equation*}
        \label{eq:median_example}
        \inf_{f:\mathcal{X} \to \mathbb{R}} \mathbb{E}_{X}\Big[ \operatorname{median}_{Y\sim \P_{Y|X}}\big(|Y-f(X)|\big) \Big]\,.
    \end{equation*}
\end{remark}

\paragraph{Quantile estimation via pinball loss.}
Since the true conditional quantile \mbox{$q_\beta(X) = \operatorname{Quantile}_\beta(\P_{G(X,Y) \mid X})$}  is not available in closed form, we learn it with a separate model $q_\beta : \mathcal{X} \to \mathbb{R}$ trained to minimize the pinball loss \citep{Steinwart_2011} while treating the frontier function $G$ as fixed, as in classical bi-level optimization \citep{bracken1973mathematical}. This update only affects the parameters for $q$, and treats $G$ as a non-differentiable object. This ensures that $q$ tracks the quantile of the current frontier distribution without interfering with the volume objective. Crucially, while this empirical approximation targets the conditional quantile, distribution-free finite-sample guarantees for exact conditional coverage are fundamentally impossible \citep{foygel2021limits}; consequently, our proposed method shares this theoretical limitation.

\paragraph{Initialization phase.} 
\label{para:warm:start}
To initialize the alternating optimization scheme and prevent early instability, we perform an initialization phase by modifying the surrogate objective in \Cref{eq:kn_reformulation}. Specifically, we replace the localized weighting term $K_n(X, G(X,Y))$ with $1$, yielding the simplified, unweighted objective $\mathbb{E}_{X,Y}[ \operatorname{Vol}_G(G(X, Y), X) ]$. The theoretical justification for this substitution is that computing the expectation over the full conditional distribution $Y \mid X$ is equivalent to integrating the $\alpha$-quantile volumes over all $\alpha \in [0,1]$. In other words, this unweighted expectation corresponds to $\int_0^1 \operatorname{Vol}_G(q_\alpha(X), X) d\alpha$, as in \citet{bach2025convex}. By removing the localized window $K_n$, we essentially optimize $G$ to minimize the average volume across all possible coverage levels simultaneously, rather than strictly focusing on the $\tau$-quantile. This global minimization provides a robust, non-degenerate initialization for the frontier function $G$, establishing a strong functional representation before narrowing the focus to the specific $\tau$-quantile neighborhood via the shrinking window.

The bi-level optimization procedure yields our final approach, summarized in \Cref{alg:level:set:regression}, which we refer to as \emph{super-level-set regression}. We assume that $G$ and the quantile estimators $q$ are parameterized by $\theta$ and $\omega$, respectively.

\begin{algorithm}[h!]
\caption{SLS regression}
\label{alg:level:set:regression}
\begin{algorithmic}[1]
\Require Data $\mathcal{D} = \{(X_i, Y_i)\}_{i=1}^m$, parameterized frontier function $G_\theta$, parameterized quantile networks $q_\omega$, target level $\tau$, decay schedules $\phi(\cdot), \psi(\cdot)$, warm-up steps $n_0$, total iterations $T$, learning rates $\eta_\theta, \eta_\omega$, objective function $h_G(t, X)$ (Volume or log volume for SLS regression).
\vspace{4pt}
\For{$j = 1, \dots, T$}
    \State Sample a mini-batch $\mathcal{B} \subseteq \mathcal{D}$
    \If{$j \leq n_0$}
        \State $ \hat{\mathcal{L}}_{\theta} \gets \frac{1}{|\mathcal{B}|} \sum_{(X_i, Y_i) \in \mathcal{B}} h_{G_\theta}\big(G_\theta(X_i, Y_i), X_i\big)$
    \Else
        \State Calculate window boundaries: $\beta_{\text{low}} = \tau-\phi(j)$ and $\beta_{\text{high}} = \tau+\psi(j)$
        \State $ \hat{\mathcal{L}}_{\theta} \gets \frac{1}{|\mathcal{B}|} \sum_{(X_i, Y_i) \in \mathcal{B}} \mathbbm{1}_{\mathopen\{q_\omega^{(\beta_{\text{low}})}(X_i) \leq G_\theta(X_i,Y_i) \leq q_\omega^{(\beta_{\text{high}})}(X_i)\mathopen\}} h_{G_\theta}\big(G_\theta(X_i, Y_i), X_i\big)$
    \EndIf
    \State $\theta \gets \theta - \eta_\theta \nabla_\theta \hat{\mathcal{L}}_{\theta}$
    \vspace{4pt}
    \State $\hat{\mathcal{L}}_{\omega} \gets \frac{1}{|\mathcal{B}|} \sum_{(X_i, Y_i) \in \mathcal{B}} \sum_{\beta \in \{\tau-\phi(j), \tau, \tau+\psi(j)\}} \mathcal{L}_{\text{pinball}}^{(i)}(\omega; \beta)$ 
    \State $\omega \gets \omega - \eta_\omega \nabla_\omega \hat{\mathcal{L}}_{\omega}$
\EndFor
\end{algorithmic}
\end{algorithm}

\section{Examples of Frontier Functions}
\label{sec:frontier:examples}

We start by restricting our attention to frontier functions $G \in \mathcal{G}$ for which the volume of the sub-level set $\{y \in \mathcal{Y} : G(X,y) \leq t\}$ admits a closed-form expression as a function of the threshold $t$. We first consider the flow-based Mahalanobis frontier:
\begin{equation}
    \label{eq:flow_frontier}
    G_\theta(X, Y) = \big\| L_\theta(X) \big(T_\phi(Y; X) - \mu_\theta(X)\big)\big\|_2^2\, ,
\end{equation}
where $\theta$ and $\phi$ are learnable parameters. Here, $L_\theta(X)$ is a $d \times d$ lower triangular matrix with strictly positive diagonal entries and $\mu_\theta(X) \in \mathbb{R}^d$. The mapping $T_\phi(\cdot; X) : \mathcal{Y} \to \mathbb{R}^d$ is a conditional volume-preserving normalizing flow (see, e.g., \citealt{dinh2014nice}). By design, this flow is a diffeomorphism constructed to have a Jacobian determinant of exactly $1$. This critical property guarantees that the Lebesgue measure is strictly invariant under the transformation $T_\phi$, yielding the volume surrogate
\begin{equation*}
    \Vol(q(X), X) \propto \frac{q(X)^{d/2}}{\det(L_\theta(X))} \, .
\end{equation*}

Further discussion on losses induced by the frontier in \Cref{eq:flow_frontier} is provided in \Cref{app:subsec:single:flow}. If we were to set $T_\phi(Y; X) = Y$, the confidence regions induced by \Cref{eq:set:definition} would be simple rigid ellipsoids. The normalizing flow, however, allows us to apply highly nonlinear transformations to these ellipsoids while exactly preserving their volume in the latent space. This clarifies our previous theoretical assertion: if the function class $\mathcal{G}$ is sufficiently expressive, we can exactly recover the true optimal level sets by minimizing \Cref{eq:inf_problem}. By incorporating the transformation $T_\phi$, a natural question arises: what family of geometric sets can be exactly characterized by the sub-level sets of \Cref{eq:flow_frontier}? The following proposition formalizes the expressive power of this formulation.

\begin{proposition}
    \label{prop:moser_single}
    Let $\Omega \subset \mathbb{R}^d$ be a compact domain with a smooth boundary, and assume $\Omega$ is diffeomorphic to a closed ball. Then, there exists a volume-preserving diffeomorphism $T: \rb^d\to\rb^d$, a $d\times d$ lower triangular matrix $L$ with strictly positive diagonal entries, a vector $\mu\in\rb^d$ and a constant $q > 0$ such that the target set $\Omega$ is exactly recovered by:
    \[
    \Omega = \left\{ y \in \rb^d : \big\|L(T(y)-\mu)\big\|_2^2 \leq q \right\}.
    \]
\end{proposition}

\begin{remark}
\label{remark:ellipical:distribution:equivalence}
The initialization phase procedure (\Cref{para:warm:start}) yields an important theoretical consequence for the flow-based Mahalanobis frontier: the localized optimization phase using the shrinking window $K_n$ can sometimes be omitted entirely. If a volume-preserving flow $T_\phi$ maps $Y \mid X$ to an elliptically contoured distribution (e.g., a multivariate Gaussian) with center $\mu_\theta(X)$ and covariance matrix $(L_\theta(X)^\top L_\theta(X))^{-1}$, the sub-level sets of $G(X,Y)$ perfectly match the true minimum-volume sets for \emph{all} coverage levels. In this scenario, the unweighted initialization phase objective is perfectly specified, as it minimizes expected volume globally across all levels. Thus, the initialization phase, followed by an exact calibration of $q(X)$ to the target $\tau$-quantile, is mathematically sufficient to recover the optimal confidence region. An experiment illustrating the usefulness of the shrinking window is provided in~\Cref{app:sec:alternative_losses}.
\end{remark}

\subsection{Union of normalizing flows}
\label{subsec:union:flow}
While a single volume-preserving flow is highly expressive, it is fundamentally a diffeomorphism and thus preserves topological connectedness. Consequently, it cannot easily model disjoint, heavily multimodal regions without introducing extreme, physically or biologically implausible distortions in the mapping. A flexible alternative captures complex, disconnected spaces via the union of localized flow-based regions, conceptually mirroring the level sets of a transformed Gaussian mixture, but optimizing boundaries directly by taking the union of flow-based regions.

We define an ensemble of $K$ independent conditional normalizing flows $T_{\phi_k}(Y; X)$, along with their respective localized networks $\mu_{\theta_k}(X)$ and shape matrices $L_{\theta_k}(X)$. We define the individual frontier for the $k$-th component by using its local latent Mahalanobis distance:
\begin{equation}
    \label{eq:component_frontier}
    G_k(X, Y) =  \big\|L_{\theta_k}(X) \big(T_{\phi_k}(Y; X) - \mu_{\theta_k}(X)\big)\big\|_2^2.
\end{equation}

To ensure all flow components actively participate in covering the target distribution, and to avoid the ``dead centers'' pathology associated with a hard minimum, we aggregate the scaled base frontiers using a softmax-weighted average (which acts as a differentiable min operator):
\begin{equation}
\label{eq:softmin_frontier}
G_\beta(X, Y) = \sum_{k=1}^K w_{k} G_k(X, Y), \quad \text{where} \quad w_{k} = \frac{\exp\big(-\beta \cdot G_k(X, Y)\big)}{\sum_{j=1}^K \exp\big(-\beta \cdot G_j(X, Y)\big)},
\end{equation}
where $\beta > 0$ acts as an inverse temperature controlling the sharpness of the approximation. In practice, $\beta$ is increased via an annealing scheme so that the frontier function converges to the hard minimum $G_\infty(X, Y) = \min_k G_k(X, Y)$, which is then used to construct the final level set. 

A practical limitation of the smooth formulation in \Cref{eq:softmin_frontier} is that the exact volume $\Vol_{G_\beta}(t, X)$ is not generally available in closed form. Instead, we optimize a volume proxy $\Omega_{\Vol}$ defined as the sum of the individual component volumes. While this approximation is not exact when components overlap, in practice, the optimization dynamics naturally drive the components toward configurations with negligible overlap. Further discussion of the training procedure induced by the frontier \Cref{eq:softmin_frontier} is presented in \Cref{app:subsec:union:flow}.

The theoretical capacity of these minimum-based frontier to capture disjoint topologies is established in the following proposition, whose proof is a direct consequence of Proposition~\ref{prop:moser_single}.
\begin{proposition}
    \label{prop:moser_union}
    Let $\Omega = \bigcup_{k=1}^K \Omega_k \subset \mathbb{R}^d$ be a finite union of $K$ disjoint sets, where each $\Omega_k$ is a compact domain with a smooth boundary and diffeomorphic to a closed ball. Then, there exists $G_\infty\in \mathcal{G}$ from \Cref{eq:softmin_frontier} such that (with the $X$-dependency omitted for clarity):
    \[
    \Omega = \left\{ y \in \mathbb{R}^d : G_\infty(y) \leq q \right\}.
    \]
\end{proposition}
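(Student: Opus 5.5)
The plan is to apply Proposition~\ref{prop:moser_single} to each component $\Omega_k$ separately, obtaining $(T_k, L_k, \mu_k, q_k)$ with
\[
\Omega_k = \bigl\{ y : \|L_k(T_k(y)-\mu_k)\|_2^2 \le q_k \bigr\}.
\]
The components produced this way may have different thresholds $q_k$, whereas the statement requires a single common $q$. So the first step is to normalize to $q=1$. Set $\tilde L_k = q_k^{-1/2} L_k$. This is still lower triangular with strictly positive diagonal, since $q_k>0$, and
\[
\|\tilde L_k(T_k(y)-\mu_k)\|_2^2 \le 1 \iff \|L_k(T_k(y)-\mu_k)\|_2^2 \le q_k .
\]
With this rescaling, the component frontiers $G_k(y) = \|\tilde L_k(T_k(y)-\mu_k)\|_2^2$ from \Cref{eq:component_frontier} all satisfy $\{G_k \le 1\} = \Omega_k$.

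Next I define $G_\infty = \min_k G_k$, which is the hard-minimum limit of \Cref{eq:softmin_frontier}. For any $q$, the sub-level set of a pointwise minimum is the union of the sub-level sets:
\[
\{y : \min_k G_k(y) \le q\} = \bigcup_{k=1}^K \{y : G_k(y) \le q\}.
\]
Taking $q = 1$ gives $\{G_\infty \le 1\} = \bigcup_k \Omega_k = \Omega$, which is the claim.

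Disjointness of the $\Omega_k$ is not actually needed for this set identity; it only matters for the volume proxy. Because the flows $T_{\phi_k}$ are independent per component, no compatibility condition between the different $T_k$ arises.

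The only genuine subtlety is confirming that $G_\infty$ belongs to the class $\mathcal{G}$ of \Cref{eq:softmin_frontier}. That class is defined through the $\beta\to\infty$ limit and through per-component parameters $(T_{\phi_k}, L_{\theta_k}, \mu_{\theta_k})$. I would argue membership by reading $\mathcal{G}$ as the class of hard-min frontiers built from arbitrary volume-preserving diffeomorphisms, lower triangular positive-diagonal matrices, and centers. Under that reading the constructed tuple qualifies directly, and the $X$-dependency is handled by taking constant maps.

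If one insists instead on the smooth $G_\beta$, I would note that $G_\beta \to \min_k G_k$ pointwise as $\beta\to\infty$. Indeed, the softmax weights concentrate on the argmin set, and averaging over ties still yields the minimum. The exact-recovery statement is therefore made for $G_\infty$, as in the proposition.
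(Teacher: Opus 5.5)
Your proposal is correct and follows the route the paper intends: the paper only says the result is a direct consequence of Proposition~\ref{prop:moser_single}, and your argument supplies the details that remark leaves implicit. Those details are the componentwise application of that proposition, the rescaling $\tilde L_k = q_k^{-1/2} L_k$ to get a common threshold, and the identity $\{y : \min_k G_k(y) \le q\} = \bigcup_k \{y : G_k(y) \le q\}$ (your remark that disjointness is not needed for this identity is also right).
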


\section{Experiments}
\label{sec:experiments}

All of our experiments can be found and reproduced in the associated GitHub
repository\footnote{\url{https://github.com/ElSacho/super_level_sets_regression}}. Additional information, including the model architecture, learning procedure, and the choice of the shrinking window schedules, is provided in Appendices~\ref{app:architecture}~\&~\ref{app:sec:training:strategy}.

\subsection{Synthetic experiments}

We start by assessing the robustness of our approach across different synthetic scenarios. Illustrative examples without dependence on the covariates $X$ are provided in \Cref{fig:illustrative}, and additional figures can be found in \Cref{app:additional:experiments}. These experiments demonstrate that our framework allows us to directly recover complex geometric shapes.

\begin{figure}[t!]
\center
\subfigure{\includegraphics[width=0.48\columnwidth]{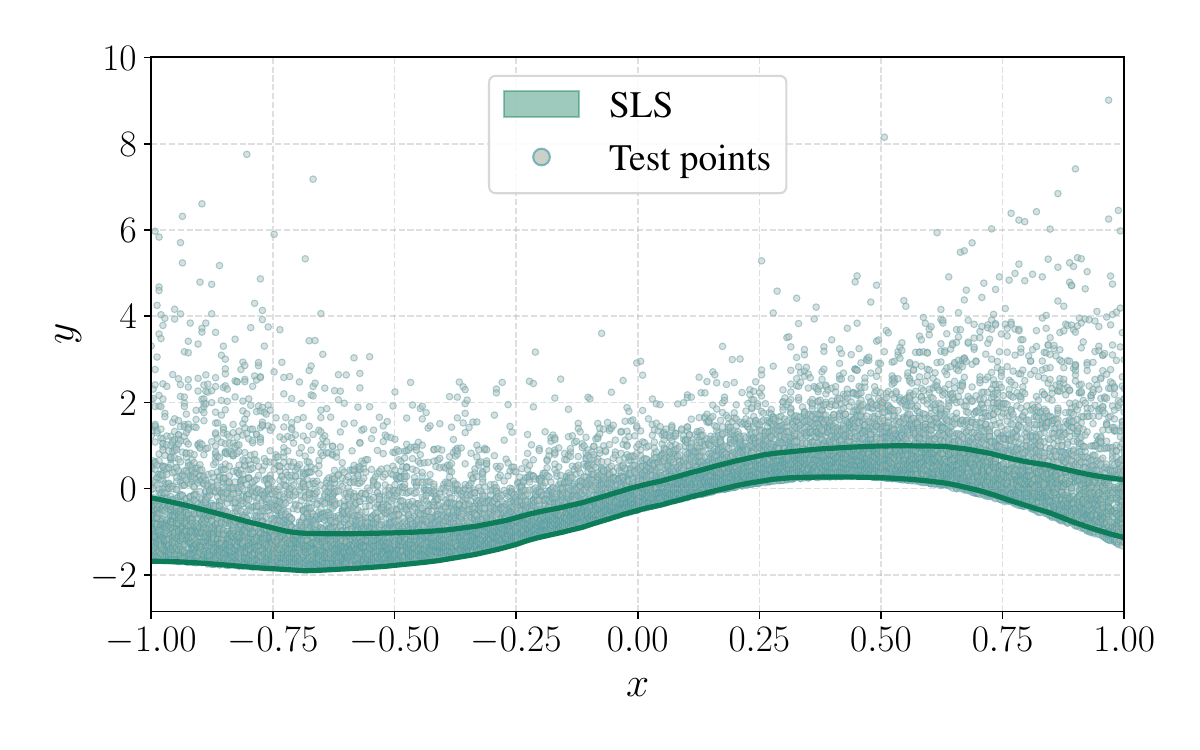}} ~
\subfigure{\includegraphics[width=0.48\columnwidth]{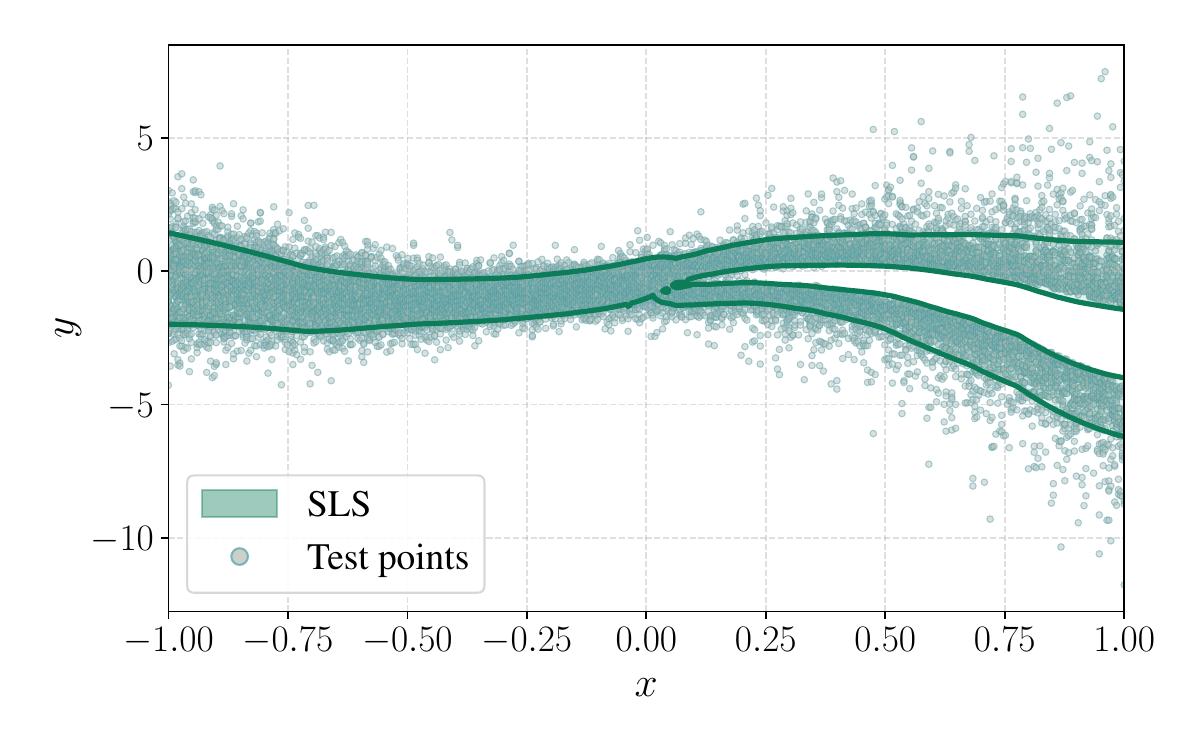}}
\vspace{-5mm}
\caption{\textbf{SLS regression on synthetic 1D conditional distributions.} \textbf{Left:} A single-component asymmetric exponential distribution evaluated at a target coverage of $30\%$. The shrinking window successfully corrects the sub-optimal centering caused by the initialization phase. \textbf{Right:} A mixture of exponentials transitioning from unimodal to bimodal, evaluated at a target coverage of $80\%$. The union of flows dynamically adapts to the changing topology.}
\label{fig:1d_synthetic}
\end{figure}

We now focus our analysis on scenarios featuring a strong dependence on the feature vector $X$. We begin with a univariate example where $X \sim \mathcal{U}([-1, 1])$, and the conditional distribution $Y|X$ follows a mixture of exponential distributions with either one or two components, as shown in \Cref{fig:1d_synthetic}. For the single-component case, we use the frontier $\mathcal{G} = \left\{ (x, y) \mapsto |y - f(x)| \mid f: \mathcal{X} \to \mathbb{R} \right\}$, which restricts the prediction sets to intervals. In this example, the unweighted initialization phase, which accounts for all possible volume sizes simultaneously, tends to push $f(x)$ towards the conditional median of $Y|X=x$. Because the exponential noise is highly asymmetric, the median is generally not the optimal center for the highest density region. This highlights the critical importance of the shrinking window, which enables the model to dynamically adjust and learn the precise level set for a specific target quantile. 

For the bimodal example, we employ the union of flows frontier (\Cref{eq:softmin_frontier}) with $K=3$ components. The model successfully identifies the unimodal structure for low values of $X$, and correctly switches to disjoint, multimodal sets as $X$ increases. Crucially, the flow components collaboratively align to form the correct union of intervals, effectively adapting even when the number of parameterized flows ($K=3$) exceeds the true number of modalities in the data.

We also evaluate our method in a multivariate setting featuring either heteroscedastic Gaussian noise contaminated with $10\%$ outliers, or heteroscedastic exponential noise. In both scenarios, we utilize the flow-based Mahalanobis frontier (\Cref{eq:flow_frontier}). For the Gaussian scenario, we restrict the conditional flow to the identity mapping, thereby limiting the model to recovering rigid ellipsoids. While standard maximum likelihood estimation (minimizing the Gaussian negative log-likelihood) would be heavily skewed by the presence of the outliers, our shrinking window approach learns to completely ignore data points that fall outside the target level set. Similarly, for the exponential noise scenario, our approach consistently and accurately captures the true level sets without being forced to model the entire asymmetric tail.

\begin{figure}[t!]
\center
\subfigure{\includegraphics[width=0.48\columnwidth]{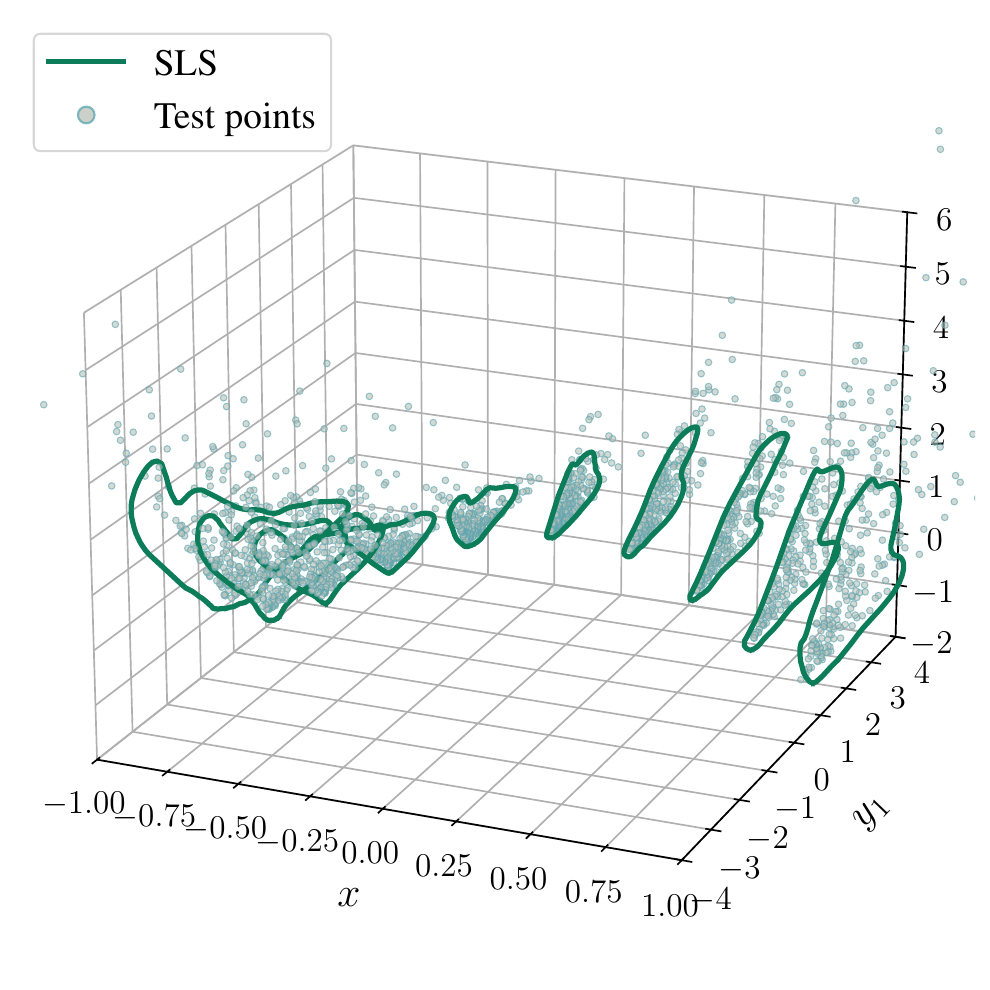}} ~
\subfigure{\includegraphics[width=0.48\columnwidth]{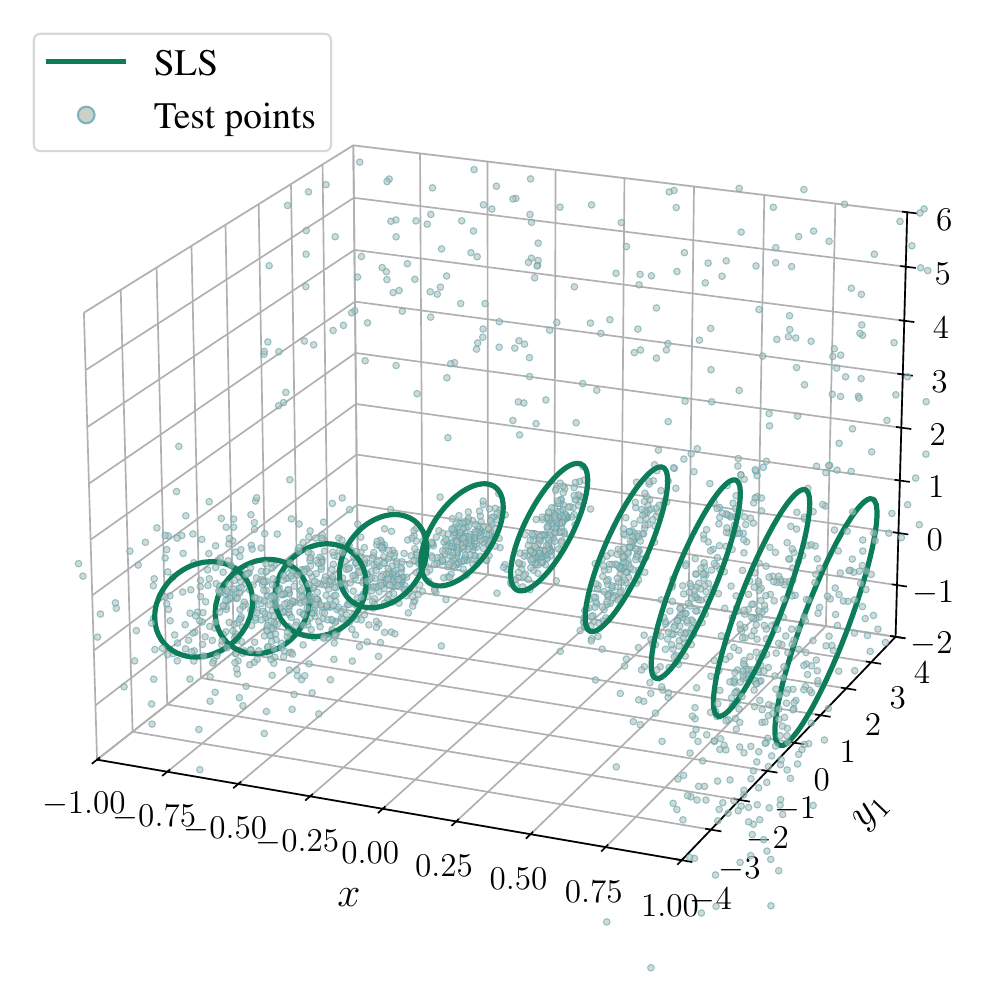}}
\vspace{-5mm}
\caption{\textbf{SLS on synthetic 2D conditional distributions} with a flow-based frontier. \textbf{Left:} Heteroscedastic exponential noise at a target coverage of $80\%$. \textbf{Right:} Heteroscedastic Gaussian noise contaminated with $10\%$ outliers, modeled with a rigid ellipsoid frontier (identity flow) at a target coverage of $60\%$. The model successfully ignores the outliers to capture the highest density region.}
\label{fig:2d_synthetic}
\end{figure}

\subsection{Real datasets}
For real-world experiments, we compare our approach against a conditional density estimation (CDE) baseline. The goal of this experiment is not to benchmark our strategy, but rather to establish empirically that the results are consistent with those of CDE. The CDE model minimizes the negative log-likelihood of a multivariate quantile function forecaster \citep{kan2022multivariate}, utilizing input convex neural networks \citep{brandon2017input} as implemented by \citet{dheur2025unified}. To guarantee marginal validity, we conformalize the CDE predictions with the C-HDR non-conformity scores \citep{izbicki2022cd}, and our approach with the score detailed in~\Cref{app:conformalizing}. We also compare our approach with the multivariate conformal prediction strategies targeting conditional coverage in \citet[Table 1]{dheur2025unified}. These are L-CP and C-PCP \citep{dheur2025unified} and CP2-PCP \citep{plassier2024probabilistic}. We also compare with PCP \citep{wang2023probabilistic} detailed in~\Cref{app:baselines}.

Evaluating these sets is challenging because the true data distribution is unknown. To measure performance, we plot the average volume (scaled by its $(1/d)$-th power, where $d$ is the output dimension) against the estimated conditional coverage deviation, $\mathbb{E}[|\mathbb{P}(Y \in C(X) \mid X) - \tau|]$. We approximate this deviation using the ERT metric \citep{braun2025conditional}.

The results, averaged across six datasets and ten random seeds, are presented in \Cref{fig:real:data}. Overall, SLS regression performs competitively with CDE, operating directly at the Pareto frontier of the inherent trade-off between region size and conditional coverage. When interpreting these results, it is crucial to analyze the trade-off simultaneously: while certain strategies may output smaller sets, they often do so at the cost of significantly degraded conditional coverage.

\begin{figure}[h]
    \center
\subfigure{\includegraphics[width=0.48\columnwidth]{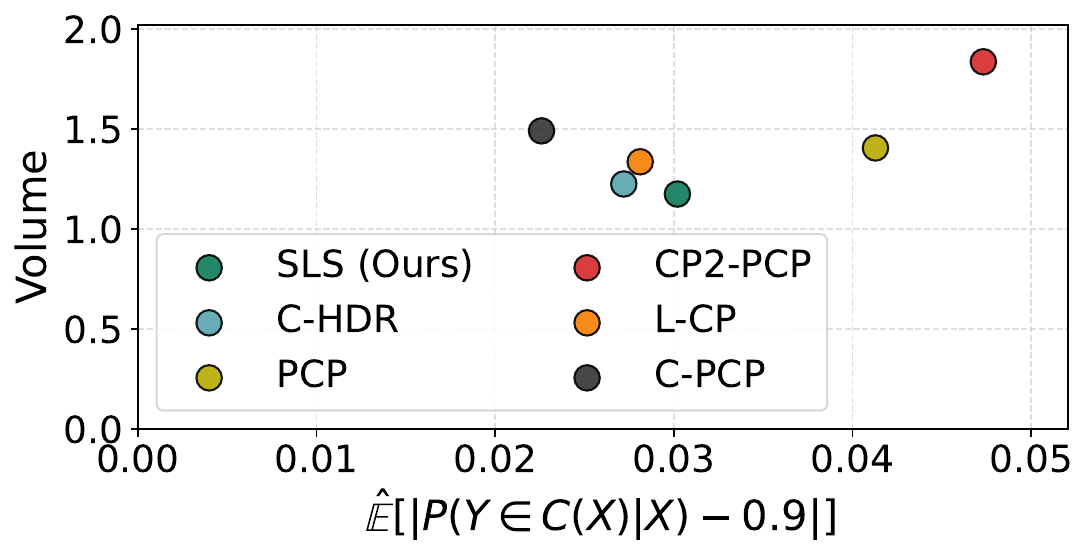}} ~
\subfigure{\includegraphics[width=0.48\columnwidth]{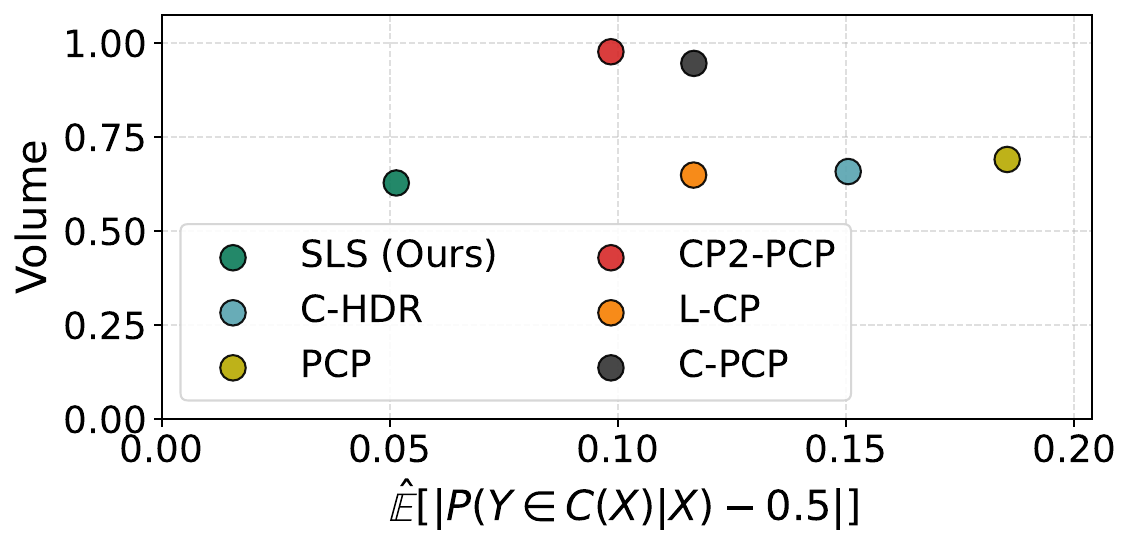}} 
    \caption{\textbf{Performance on real-world datasets.} Average scaled volume versus the estimated conditional coverage deviation. Closer to $(0,0)$ is better. \textbf{Left:} Target coverage 90\%. \textbf{Right:} Target coverage 50\%.}
    \label{fig:real:data}
\end{figure}

\section{Conclusion}
In this work, we introduced SLS regression, a novel framework that shifts the paradigm of highest density region estimation from an indirect density plug-in approach to direct geometric optimization. By developing a surrogate objective based on a shrinking probability window, we successfully resolved the notoriously difficult implicit coupling between volume minimization and conditional quantile constraints. A promising direction for future work is the design of other classes of frontier functions tailored to specific domains, such as incorporating physical priors, handling discrete or mixed-type responses, scaling the geometric optimization to highly structured, high-dimensional spaces.

\section*{Acknowledgements}

Authors acknowledge funding from the European Union (ERC-2022-SYG-OCEAN-101071601). Views and opinions expressed are however those of the author(s) only and do not necessarily reflect those of the European Union or the European Research Council Executive Agency. Neither the European Union nor the granting authority can be held responsible for them.

This publication is part of the Chair «Markets and Learning», supported by Air Liquide, BNP PARIBAS ASSET MANAGEMENT Europe, EDF, Orange and SNCF, sponsors of the Inria Foundation.

This work has also received support from the French government, managed by the National Research Agency, under the France 2030 program with the reference «PR[AI]RIE-PSAI» (ANR-23-IACL-0008).

Finally, the authors would like to thank Eugène Berta and David Holzmüller for their valuable feedback on the paper.


\bibliographystyle{Chicago}
\bibliography{bibliography}

\newpage
\onecolumn
\begin{appendices}
\listofappendices

\counterwithin{figure}{section}
\counterwithin{table}{section}

\crefalias{section}{appendix}
\crefalias{subsection}{appendix}

\newpage

\section{Proofs}
\label{app:proofs}

\begin{assumption}[Regularity of the hypothesis class]
    \label{ass:regularity}
    The function class $\mathcal{G}$ is a compact metric space. For $\mathbb{P}_X$-almost every $X \in \mathcal{X}$, the following hold uniformly for all $G \in \mathcal{G}$:
    \begin{enumerate}[itemsep=0mm, topsep=0mm, leftmargin=5mm]
        \item \textbf{Uniform density bound:} The conditional distribution of $G(X,Y)$ given $X$ admits a density $f_{G|X}(g)$, and there exists a strictly positive function $c(X) > 0$ such that $f_{G|X}(g) \geq c(X)$ in a neighborhood of the $\tau$-quantile $q_\tau(X) = \operatorname{Quantile}_\tau(\mathbb{P}_{G|X})$.
        \item \textbf{Uniform Lipschitz volume:} There exists a function $L(X)$ such that the mapping $g \mapsto \Vol_G(g, X)$ is $L(X)$-Lipschitz continuous in the aforementioned neighborhood of $q_\tau(X)$. Furthermore, the objectives $J_n(\cdot)$ and $J(\cdot)$ are continuous on the function class $\mathcal{G}$.
        \item \textbf{Integrability:} The true objective $J(G) = \mathbb{E}_{X}\big[ \Vol_G(q_\tau(X), X) \big]$ is finite, and $\mathbb{E}_X\left[ \frac{L(X)}{c(X)} \right] < \infty$.
    \end{enumerate}
\end{assumption}

\begin{proposition}[Convergence of the surrogate objective]
    \label{app:prop:surrogate_convergence}
    Suppose Assumption~\ref{ass:regularity} holds. Consider the sequence of unconstrained optimization problems:
    \begin{equation*}
        \inf_{G \in \mathcal{G}} J_n(G) \coloneqq \mathbb{E}_{X,Y}\left[ \frac{K_n\big(X, G(X,Y)\big)}{\psi(n)+\phi(n)} \Vol_G\big(G(X, Y), X\big) \right].
    \end{equation*}
    Then, as $n \to +\infty$, the sequence of functionals $J_n(G)$ converges uniformly to the true objective $J(G) = \mathbb{E}_{X}\big[ \Vol_G(q_\tau(X), X) \big]$ over $\mathcal{G}$. Furthermore, any limit point of a sequence of optimal solutions $G_n^* \in \argmin_{G \in \mathcal{G}} J_n(G)$ is an optimal solution to the constrained problem in \Cref{eq:inf_problem}.
\end{proposition}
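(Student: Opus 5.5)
Our plan is to reduce everything to the quantile-level representation \Cref{eq:quantile:interpretation}, which turns $J_n$ into an average of $\Vol_G(q_\beta(X),X)$ over $\beta\in[\tau-\phi(n),\tau+\psi(n)]$. Writing $\varepsilon_n=\phi(n)+\psi(n)$, we get
\[
J_n(G)-J(G)=\mathbb{E}_X\left[\frac{1}{\varepsilon_n}\int_{\tau-\phi(n)}^{\tau+\psi(n)}\Big(\Vol_G\big(q_\beta(X),X\big)-\Vol_G\big(q_\tau(X),X\big)\Big)\,d\beta\right].
\]
This identity follows from the tower property and the change of variables $\beta=F_{G|X}(g)$. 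The change of variables is legitimate because item 1 of Assumption~\ref{ass:regularity} gives $G(X,Y)\mid X$ a density, so $F_{G|X}$ is continuous and $F_{G|X}(G(X,Y))$ is uniform on $[0,1]$ conditionally on $X$.

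The core pointwise estimate controls $|q_\beta(X)-q_\tau(X)|$. Since $f_{G|X}\ge c(X)$ on a neighborhood $U(X)$ of $q_\tau(X)$, the CDF is strictly increasing there with slope at least $c(X)$. Hence for $|\beta-\tau|$ small enough that $q_\beta(X)$ stays in $U(X)$ we have $|q_\beta(X)-q_\tau(X)|\le|\beta-\tau|/c(X)\le \varepsilon_n/c(X)$. The Lipschitz condition (item 2) then gives
\[
\big|\Vol_G(q_\beta(X),X)-\Vol_G(q_\tau(X),X)\big|\le \frac{L(X)}{c(X)}\,\varepsilon_n .
\]
Averaging over $\beta$ and integrating in $X$, item 3 yields $\sup_{G\in\mathcal{G}}|J_n(G)-J(G)|\le \varepsilon_n\,\mathbb{E}_X[L(X)/c(X)]\to0$, which is the uniform convergence.

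The step I expect to be the main obstacle is justifying that $q_\beta(X)$ lies in the good neighborhood uniformly in $G$ and for $\mathbb{P}_X$-a.e.\ $X$ simultaneously, for all $n$ large. I would read the assumption as providing a neighborhood of radius $r(X)>0$ independent of $G$, and the bound shows $q_\beta$ remains within distance $\varepsilon_n/c(X)$ of $q_\tau$ as long as this quantity is below $r(X)$. The set of $X$ where $\varepsilon_n/c(X)\ge r(X)$ shrinks to a null set, but on it only a crude bound is available. To handle it I would either strengthen the reading to a neighborhood in probability level, i.e.\ the density bound holds on $[q_{\tau-\delta}(X),q_{\tau+\delta}(X)]$ for a fixed $\delta>0$, which makes the estimate valid for all $X$ once $\varepsilon_n<\delta$. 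Alternatively, I would dominate the bad-set contribution using finiteness of $J$ together with dominated convergence. The first option is cleaner and I would adopt it, remarking that it is the intended meaning of item 1.

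For the argmin statement I would run a standard $\Gamma$-convergence/epi-convergence argument on the compact metric space $\mathcal{G}$. Let $G_{n_k}^*\to\bar G$. For any $G\in\mathcal{G}$, $J_{n_k}(G_{n_k}^*)\le J_{n_k}(G)$. Uniform convergence gives $J_{n_k}(G_{n_k}^*)-J(G_{n_k}^*)\to0$, and continuity of $J$ (item 2) gives $J(G_{n_k}^*)\to J(\bar G)$. Hence $J(\bar G)\le J(G)$, so $\bar G$ minimizes $J$. Finally, $\min_G J(G)$ is exactly the value of \Cref{eq:inf_problem}: for fixed $G$, monotonicity of $t\mapsto\Vol_G(t,X)$ means the optimal feasible $q$ is $q(X)=q_\tau(X)$. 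The pair $(\bar G,q_\tau^{\bar G})$ is therefore optimal for the constrained problem.
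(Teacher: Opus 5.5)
Your proposal is correct and follows essentially the same route as the paper's proof. Both rewrite $J_n$ through the conditional CDF (equivalently, the quantile-level average). Both bound the deviation pointwise by $L(X)/c(X)$ times the window size, using the local Lipschitz property of the quantile function. Both then conclude with the standard compactness and uniform-convergence argument for limit points of minimizers.

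Two of your additions are refinements rather than departures. You point out that the density-bound neighborhood must be uniform in $X$ and $G$; the paper passes over this with ``sufficiently small perturbation'' and implicitly assumes it, so your strengthened probability-level reading is what its argument actually uses. You also check explicitly, via monotonicity of $t\mapsto\Vol_G(t,X)$, that $\min_G J$ equals the value of \Cref{eq:inf_problem}, a step the paper leaves unstated.
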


\begin{proof}
The proof proceeds in two main steps. First, we establish the uniform convergence of the surrogate objective $J_n(G)$ to the true objective $J(G)$. Second, we rely on this uniform convergence and the compactness of the hypothesis class $\mathcal{G}$ to show the convergence of the minimizers.

For a fixed function $G \in \mathcal{G}$, we rewrite the surrogate objective $J_n(G)$ using the law of total expectation:
\[
J_n(G) = \mathbb{E}_X \left[ \mathbb{E}_{Y \mid X} \left[ \frac{K_n\big(X, G(X,Y)\big)}{\psi(n) + \phi(n)} \Vol_G\big(G(X,Y), X\big) \right] \right].
\]
Let $F_{G|X}$ denote the conditional cumulative distribution function of $g = G(X,Y)$ given $X$. By Assumption~\ref{ass:regularity}, $F_{G|X}$ is absolutely continuous in the neighborhood of $q_\tau(X)$. Expressing the inner conditional expectation as an integral with respect to $F_{G|X}$ yields:
\[
\frac{1}{\psi(n) + \phi(n)} \int_{q_{\tau-\phi(n)}(X)}^{q_{\tau+\psi(n)}(X)} \Vol_G(g, X) dF_{G|X}(g).
\]
Using the Lipschitz condition in Assumption~\ref{ass:regularity}, we get
\begin{align*}
    \sup_{G\in\mathcal{G}}|J_n(G) - J(G)| &\le \sup_{G\in\mathcal{G}} \mathbb{E}_X \left[ \frac{1}{\psi(n)+\phi(n)} \int_{q_{\tau-\phi(n)}}^{q_{\tau+\psi(n)}}\!\!\!\!\!\!\!\!\!\! |\text{Vol}_G(g,X) \!-\! \text{Vol}_G(q_\tau(X), X)| dF_{G|X}(g) \right] \\
    &\leq \sup_{G\in\mathcal{G}} \mathbb{E}_X \!\left[ \frac{1}{\psi(n)+\phi(n)} \int_{q_{\tau-\phi(n)}}^{q_{\tau+\psi(n)}} L(X)|g - q_\tau(X)| dF_{G|X}(g) \right] \\
    &\leq \sup_{G\in\mathcal{G}} \E_X \left[ L(X) \cdot\max \Big( \big| q_{\tau+\psi(n)}(X) \!-\! q_\tau(X) \big|, \big| q_{\tau-\phi(n)}(X) - q_\tau(X) \big| \Big) \right]\!,
\end{align*}
where we used $\ds\int_{q_{\tau-\phi(n)}}^{q_{\tau+\psi(n)}} dF_{G|X}(g)=\psi(n)+\phi(n)$.

Under the uniform density bound (Assumption~\ref{ass:regularity}.1), the inverse function theorem implies that the conditional quantile function is locally Lipschitz with respect to the probability level, with constant~$1/c(X)$. Consequently, for any sufficiently small perturbation $\epsilon$, we have \mbox{$|q_{\tau+\epsilon}(X) - q_\tau(X)| \leq \epsilon / c(X)$}. Applying this to the interval boundaries yields:
\[
\max \Big( \big| q_{\tau+\psi(n)}(X) - q_\tau(X) \big|, \big| q_{\tau-\phi(n)}(X) - q_\tau(X) \big| \Big) \leq \frac{\max(\phi(n), \psi(n))}{c(X)}.
\]
Substituting this bound into our supremum expectation, we obtain:
$$\sup_{G \in \mathcal{G}} |J_n(G) - J(G)| \leq \sup_{G \in \mathcal{G}} \mathbb{E}_X \left[ L(X) \frac{\max(\phi(n), \psi(n))}{c(X)} \right].$$
The term inside the expectation is now independent of $G$. Factoring out the deterministic sequences gives:
$$\sup_{G \in \mathcal{G}} |J_n(G) - J(G)| \leq \max(\phi(n), \psi(n)) \cdot \mathbb{E}_X \left[ \frac{L(X)}{c(X)} \right].$$
By Assumption~\ref{ass:regularity}.3, $\mathbb{E}_X [ L(X)/c(X) ]$ is finite. As $n \to +\infty$, we have $\phi(n) \to 0$ and $\psi(n) \to 0$, which forces the right-hand side to vanish. This establishes the uniform convergence:
$$\lim_{n \to +\infty} \sup_{G \in \mathcal{G}} |J_n(G) - J(G)| = 0.$$

For the second step, we start by showing the existence of $G_n^* \in \argmin_{G \in \mathcal{G}} J_n(G)$ for all $n$. 

Because $J_n$ is a continuous real-valued functional operating on the compact topological space $\mathcal{G}$, the extreme value theorem guarantees that it attains its global infimum. Consequently, the set $\argmin_{G \in \mathcal{G}} J_n(G)$ is non-empty, meaning there exists at least one optimal solution $G_n^* \in \mathcal{G}$ such that $J_n(G_n^*) = \inf_{G \in \mathcal{G}} J_n(G)$.

Let $(G_n^*)_{n \in \mathbb{N}}$ be a sequence of optimal solutions, such that $G_n^* \in \argmin_{G \in \mathcal{G}} J_n(G)$. Because $\mathcal{G}$ is a compact topological space, this sequence has at least one limit point. Let $G^*$ be such a limit point, meaning there exists a subsequence $(G_{n_k}^*)_{k \in \mathbb{N}}$ that converges to $G^*$. We aim to show that $G^* \in \argmin_{G \in \mathcal{G}} J(G)$.

By the definition of the minimizer, for any arbitrary function $G \in \mathcal{G}$, we have:
$$J_{n_k}(G_{n_k}^*) \leq J_{n_k}(G).$$
We take the limit of both sides as $k \to +\infty$. For the right-hand side, the uniform convergence of $J_n$ implies pointwise convergence, so $\lim_{k \to \infty} J_{n_k}(G) = J(G)$.

For the left-hand side, we apply the triangle inequality:
$$|J_{n_k}(G_{n_k}^*) - J(G^*)| \leq |J_{n_k}(G_{n_k}^*) - J(G_{n_k}^*)| + |J(G_{n_k}^*) - J(G^*)|.$$
The first term on the right vanishes as $k \to +\infty$ due to the uniform convergence of $J_n$ to $J$. The second term vanishes because $J$ is continuous and $G_{n_k}^* \to G^*$. 

Thus, $\lim_{k \to \infty} J_{n_k}(G_{n_k}^*) = J(G^*)$. Substituting these limits back into our inequality yields:
$$J(G^*) \leq J(G) \quad \forall \, G \in \mathcal{G}.$$
This concludes the proof that any limit point $G^*$ of the sequence of minimizers is an optimal solution to the true constrained problem.
\end{proof}

\begin{proposition}
    \label{app:moser:prop}
    Let $\Omega \subset \mathbb{R}^d$ be a compact domain with a smooth boundary, and assume $\Omega$ is diffeomorphic to a closed ball. Then, there exists a volume-preserving diffeomorphism $T: \mathbb{R}^d\to\mathbb{R}^d$, a $d\times d$ lower triangular matrix $L$ with strictly positive diagonal entries, a vector $\mu\in\mathbb{R}^d$ and a constant $q > 0$ such that the target set $\Omega$ is exactly recovered by:
    $$
    \Omega = \left\{ y \in \mathbb{R}^d : \big\|L(T(y)-\mu)\big\|_2^2 \leq q \right\}.
    $$
\end{proposition}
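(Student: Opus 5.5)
We reduce to the case of a round ball and then invoke a Moser-type extension result for volume-preserving maps. Take $L = I_d$, which is lower triangular with positive diagonal. Take $\mu = 0$, and let $q = r^2$, where $r>0$ is chosen so that the Euclidean ball $B_r = \{z : \|z\|_2 \le r\}$ satisfies $\Vol(B_r) = \Vol(\Omega)$. With these choices the statement becomes: find a volume-preserving diffeomorphism $T$ of $\rb^d$ with $T(\Omega) = B_r$. Then
\[
\{y : \|L(T(y)-\mu)\|_2^2 \le q\} = T^{-1}(B_r) = \Omega.
\]

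The construction of $T$ has three steps.

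\textbf{Step 1.} By hypothesis there is a diffeomorphism $\Phi_0 : \Omega \to B_r$. Using the smooth boundary, extend it to a diffeomorphism of $\rb^d$ that is the identity outside a large ball. For this we use a collar of $\partial\Omega$ and the isotopy extension theorem (for $d \ge 2$); when $d=1$, $\Omega$ is an interval and everything is elementary. Compose with a reflection if needed so the map is orientation preserving.

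\textbf{Step 2.} Correct $\Phi_0$ inside $B_r$. The pushforward $(\Phi_0)_*\mathrm{Leb}|_\Omega$ is a smooth positive density on $B_r$ with total mass $\Vol(B_r)$. By Moser's theorem on manifolds with boundary (Dacorogna--Moser), there is a diffeomorphism $\Psi_1$ of $B_r$, equal to the identity near $\partial B_r$ after first adjusting the densities near the boundary, that carries this density to Lebesgue measure. Then $\Psi_1\circ\Phi_0$ is volume preserving on $\Omega$.

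\textbf{Step 3.} Correct on the complement $\rb^d\setminus \Omega$, whose image is $\rb^d \setminus B_r$. Here we need a diffeomorphism of the exterior, fixing a neighborhood of the boundary and the identity near infinity, that turns the pushed-forward density into Lebesgue measure. Moser's theorem applies on the annular region $A = B_R\setminus \mathrm{int}\,B_r$ between the boundary and a large sphere where the map is already the identity, provided the total masses agree. They do: the map is the identity near $\partial B_R$, so it sends $B_R$ onto itself, and since $\Omega$ and $B_r$ have equal volume, the mass of $B_R\setminus\Omega$ equals that of $A$. Gluing the interior and exterior corrections, each the identity near the seam, gives $T$.

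\textbf{Main obstacle.} The hard part is making the gluing smooth across $\partial\Omega$. The two density-correcting maps must agree to all orders at the boundary. We handle this as Moser does: first apply a preliminary diffeomorphism in a collar so that the pushforward density is exactly Lebesgue near $\partial B_r$ on both sides. This is done by matching the normal Jacobian along a collar via a flow in the normal direction. After that, the Dacorogna--Moser theorem applies with maps that are the identity near the boundary.

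The existence of a compactly supported volume-preserving extension can alternatively be cited directly as a known theorem: if two compact smooth domains of equal volume are diffeomorphic by an isotopic diffeomorphism, then there is a volume-preserving diffeomorphism of $\rb^d$ carrying one onto the other. If we rely on that citation, the only remaining check is the equal-volume normalization through $r$, which is why $q$ is a free parameter.
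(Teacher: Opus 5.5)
Your proposal is correct and follows the paper's route: fix $L=I_d$, $\mu=0$, $q=r^2$ with $\Vol(\bar B(0,r))=\Vol(\Omega)$, then obtain a global volume-preserving $T$ with $T(\Omega)=\bar B(0,r)$ from a Dacorogna--Moser extension to $\rb^d$ with control of support, which the paper cites via Teixeira's construction while you sketch its proof (collar normal flow, compactly supported Moser corrections on the ball and the annulus, equal-mass check). One precision for Step 1: the isotopy extension theorem needs an isotopy of embeddings as input, and that isotopy comes from the Palais--Cerf disc theorem (any two orientation-preserving embeddings of a closed disc into $\rb^d$ are ambiently isotopic), so that is the result Step 1 actually rests on.
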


\begin{proof}
    Without loss of generality, we can fix $L=I_d$ and $\mu=\mathbf{0}_d$. Let $r > 0$ be the unique radius such that $\mathrm{vol}(\bar{B}(0, r)) = \mathrm{vol}(\Omega)$, i.e., $r = \bigl(|\Omega| / V_d\bigr)^{1/d}$, where $V_d$ denotes the volume of the unit $d$-ball, and set $q := r^2$.

    The existence of the mapping $T$ follows from the Dacorogna-Moser theorem \citep[Theorem 1]{dacorogna1990partial} extended to the whole space $\rb^d$ with control of support. Specifically, the proof is analogous to the construction in \cite[Theorem 1 \& Example 1]{teixeira2016dacorogna}, where the initial domain is $\Omega$ and the target domain is the standard Euclidean ball $\bar{B}(0, r)$ of equal volume. Because $\Omega$ is diffeomorphic to this closed ball and has the same volume, there exists a global volume-preserving diffeomorphism $T:\rb^d\to\rb^d$ such that $T(\Omega)=\bar{B}(0, r)$.
\end{proof}

\section{Alternative Volume Objectives and Theoretical Equivalences}
\label{app:sec:alternative_losses}

\subsection{Single-flow Mahalanobis frontier}
\label{app:subsec:single:flow}

\paragraph{Imposing a determinant constraint.}
An alternative approach to formulating the objective in \Cref{sec:frontier:examples} is to strictly constrain the determinant of the shape matrix. While not the exact method used across all of our experiments, enforcing $\det(L_\theta(X)) = 1$ almost surely for all $X$ acts as a scalar normalization that is seamlessly absorbed by the boundary threshold $q(X)$. Consequently, assuming the response space is $d$-dimensional, the volume of the candidate region becomes strictly proportional to $q(X)^{d/2}$. Since the mapping $z \mapsto z^{d/2}$ is strictly monotonically increasing for $z \ge 0$, and the optimization decomposes independently for every $X$, $\P_X$-almost surely; minimizing this expected volume is mathematically equivalent to minimizing the expected threshold $\mathbb{E}_X[q(X)]$ subject to the quantile constraint, a task natively handled by \Cref{alg:level:set:regression}.

\paragraph{Learning the $\log$-volume objective for stability.}
We empirically suggest minimizing the expected $\log$ volume rather than the true volume when the output dimension is too large. Due to the quantile constraint, those two objectives are equivalent, because the optimization decomposes independently for every $X$, $\P_X$-almost surely. However, the $\log$ volume is more stable with the dimension. 

\paragraph{Link with conditional density estimation.}
For elliptically contoured distributions, the highest density regions are natively ellipsoids. As noted in Remark~\ref{remark:ellipical:distribution:equivalence}, our unweighted initialization phase naturally pushes the mapped variable $T_\phi(Y; X)$ toward an elliptical distribution. Therefore, we can expect the loss function during this initialization phase to closely parallel maximum likelihood estimation. The following proposition formalizes this equivalence for a Gaussian distribution.

\begin{proposition}
Let $L(X) \in \mathbb{R}^{d \times d}$ be a parameterized lower triangular matrix, and let the conditional quantile of the scaled squared residuals be $q(X) = \operatorname{Quantile}_\tau(\|L(X)(T_\phi(Y; X)-f(X))\|_2^2)$. Minimizing the objective:
$$ \mathbb{E}_{X}\left[-\log\det(L(X)) + \frac{d}{2}\log q(X)\right] $$
is equivalent to minimizing the objective:
$$ \mathbb{E}_X\left[-\log\det(L(X)) + q(X)\right] $$
up to an additive constant independent of $L$.
\end{proposition}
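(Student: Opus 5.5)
The plan is to exploit the scale invariance of the first objective and to optimize out the scale in the second objective pointwise in $X$. Throughout, I assume that the parameterized family of matrices $L(\cdot)$ is closed under multiplication by a positive scalar function, $L(X)\mapsto s(X)L(X)$ with $s:\mathcal{X}\to(0,\infty)$ measurable. This is needed because the argument rescales $L$ separately at each $X$. It holds, for instance, if the diagonal of $L$ is produced through an unconstrained log-scale. The flow $T_\phi$ and the center $f$ are untouched throughout.

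\textbf{Step 1 (scaling law).} Fix $X$, let $s>0$ and set $L_s = sL(X)$. This is again lower triangular with strictly positive diagonal. Since $\|L_s(T_\phi(Y;X)-f(X))\|_2^2 = s^2\|L(X)(T_\phi(Y;X)-f(X))\|_2^2$, the conditional $\tau$-quantile scales as $q_s(X)=s^2q(X)$, because quantiles commute with increasing maps. Also $\log\det L_s = d\log s + \log\det L(X)$.

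It follows that the integrand $-\log\det L + \tfrac d2\log q$ is exactly invariant under $L\mapsto sL$. The first objective therefore depends only on the shape of $L(X)$ modulo scale.

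\textbf{Step 2 (profile out the scale in the second objective).} For fixed $X$ and fixed $L(X)$, consider the map $s\mapsto -\log\det L(X) - d\log s + s^2 q(X)$. It is strictly convex in $\log s$ and tends to $+\infty$ at both ends, so it has a unique minimizer. This minimizer satisfies $s^2q(X)=d/2$, i.e. the rescaled quantile equals $d/2$. Substituting it back gives the minimum value
\[
-\log\det L(X) + \tfrac d2\log q(X) + \tfrac d2\bigl(1-\log\tfrac d2\bigr).
\]
So the scale-minimized integrand of the second objective equals the integrand of the first objective plus the constant $c_d=\tfrac d2(1-\log\tfrac d2)$, which does not depend on $L$.

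\textbf{Step 3 (conclude the equivalence).} Both objectives are expectations of integrands that depend on $L$ only through $L(X)$ at each $X$, so the optimization decomposes $\P_X$-a.s., and I take expectations pointwise.

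\begin{itemize}
\item Any $L$ yields, after the optimal rescaling $s(X)=\sqrt{d/(2q(X))}$, a value of the second objective equal to the first objective at $L$ plus $c_d$. The first objective is unchanged by this rescaling, and the second never exceeds its value at the unrescaled $L$.
\item Hence $\inf(\text{second}) = \inf(\text{first}) + c_d$.
\item Every minimizer of the second objective has $q(X)=d/2$ a.s. and minimizes the first objective.
\item Conversely, every minimizer of the first objective, rescaled so that $q(X)=d/2$, minimizes the second.
\end{itemize}

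This is the claimed equivalence up to an additive constant independent of $L$.

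\textbf{Main obstacle.} The expected delicate point is justifying the pointwise rescaling inside the class. It requires the closure assumption above and measurability of $s(X)=\sqrt{d/(2q(X))}$, which follows from the measurability of $q(\cdot)$. It also requires $q(X)>0$ a.s., which holds because $Y\mid X$ is continuous and so the quantile of the squared norm is positive. Integrability of $\log q(X)$ is needed so that both expectations are finite. I would state these as mild standing assumptions rather than try to derive them.
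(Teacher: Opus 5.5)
Your proof is correct and follows the paper's argument. You rescale $L(X)$ by a positive scalar, use $\tilde q = c^2 q$ and $\log\det(cL) = d\log c + \log\det L$, profile out the optimal scale $c^*(X)=\sqrt{d/(2q(X))}$, and substitute to obtain the constant $\frac{d}{2}\bigl(1-\log\frac{d}{2}\bigr)$. Your additional points (scale invariance of the first objective, the closure-under-scaling assumption, convexity in $\log s$ guaranteeing a true minimum, and the explicit correspondence between minimizers) make rigorous what the paper leaves implicit.
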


\begin{proof}
Consider the integrand of the second objective, $J(L(X)) = -\log\det(L(X)) + q(X)$. Because a scalar multiple of a lower triangular matrix is still a lower triangular matrix, optimizing over $L(X)$ implicitly optimizes over any arbitrary scaling of the matrix. Let us scale $L(X)$ by a strictly positive scalar function $c(X) > 0$, defining $\tilde{L}(X) = c(X)L(X)$.

We evaluate the loss integrand for this scaled matrix. By the properties of the determinant for a $d \times d$ matrix, we have $\det(\tilde{L}(X)) = c(X)^d \det(L(X))$, which yields 
\begin{equation*}
    -\log\det(\tilde{L}(X)) = -d \log c(X) - \log\det(L(X))\, .
\end{equation*}
Furthermore, the squared Euclidean norm scales quadratically as $\|\tilde{L}(X)(T_\phi(Y; X)-f(X))\|_2^2 = c(X)^2 \|L(X)(T_\phi(Y; X)-f(X))\|_2^2$. Since $c(X)^2$ is a positive constant conditionally on $X$, it factors out of the quantile operator, giving the scaled quantile $\tilde{q}(X) = c(X)^2 q(X)$.

Substituting these into the loss integrand yields
\begin{equation*}
    J(\tilde{L}(X)) = -d \log c(X) - \log\det(L(X)) + c(X)^2 q(X)\, .
\end{equation*}
To find the optimal scaling for any fixed $L(X)$, we take the derivative of $J(\tilde{L}(X))$ with respect to $c(X)$ and set it to zero. This gives $-\frac{d}{c(X)} + 2c(X)q(X) = 0$. Solving for $c(X)$ subject to the constraint $c(X) > 0$, we obtain the optimal scaling factor $c^*(X) = \sqrt{\frac{d}{2q(X)}}$.

We now substitute this optimal scale back into the expression for $J(\tilde{L}(X))$:
\begin{equation*}
\min_{c(X) > 0} J(c(X)L(X)) = -d \log\left(\sqrt{\frac{d}{2q(X)}}\right) - \log\det(L(X)) + \left(\sqrt{\frac{d}{2q(X)}}\right)^2 q(X)\, .
\end{equation*}
Simplifying the first term gives $-\frac{d}{2} \log\left(\frac{d}{2q(X)}\right) = -\frac{d}{2} \log\left(\frac{d}{2}\right) + \frac{d}{2} \log q(X)$. The final term simplifies to exactly $\frac{d}{2}$. Combining these components, we obtain
$$ \min_{c(X) > 0} J(c(X)L(X)) = \left[ -\log\det(L(X)) + \frac{d}{2} \log q(X) \right] + \frac{d}{2}\left(1 - \log\left(\frac{d}{2}\right)\right). $$
The bracketed expression is precisely the integrand of the first objective. The remaining terms form a constant that depends only on the dimension $d$ and not on the data, the model $f(X)$, or the parameters of $L$. Taking the expectation over $X$, the two optimization problems differ only by this additive constant, completing the proof.
\end{proof}

To further contextualize this result, note that the first objective corresponds to minimizing the expectation of the log-volume of the estimated level sets. Because it is theoretically equivalent to the second objective, we can optimize it directly using the alternating scheme in \Cref{alg:level:set:regression}. During the initialization phase, our procedure minimizes the unweighted proxy:
$$ \mathbb{E}_{X,Y}\left[-\log\det(L(X)) + \|L(X)(Y-f(X))\|_2^2\right], $$
which is functionally equivalent to minimizing the negative log-likelihood of a multivariate Gaussian distribution with precision matrix $\Sigma(X)^{-1} = 2 L(X)^\top L(X)$.

Crucially, however, our overall framework fundamentally diverges from standard density estimation. Because we explicitly optimize for volume subject to a specific quantile constraint, our method isolates the target level set and ignores distant outliers that would otherwise skew standard maximum likelihood methods. This robustness is illustrated in \Cref{fig:comparison:gaussian}, where we fix the flow to the identity mapping to recover rigid ellipsoids and further expose this phenomenon. While minimizing the negative log-likelihood fails due to the outliers or a non-elliptical distribution, our shrinking window approach successfully ignores the irrelevant probability mass to find the optimal minimum-volume ellipsoid.

\begin{figure}[t!]
\center
\subfigure{\includegraphics[width=0.48\columnwidth]{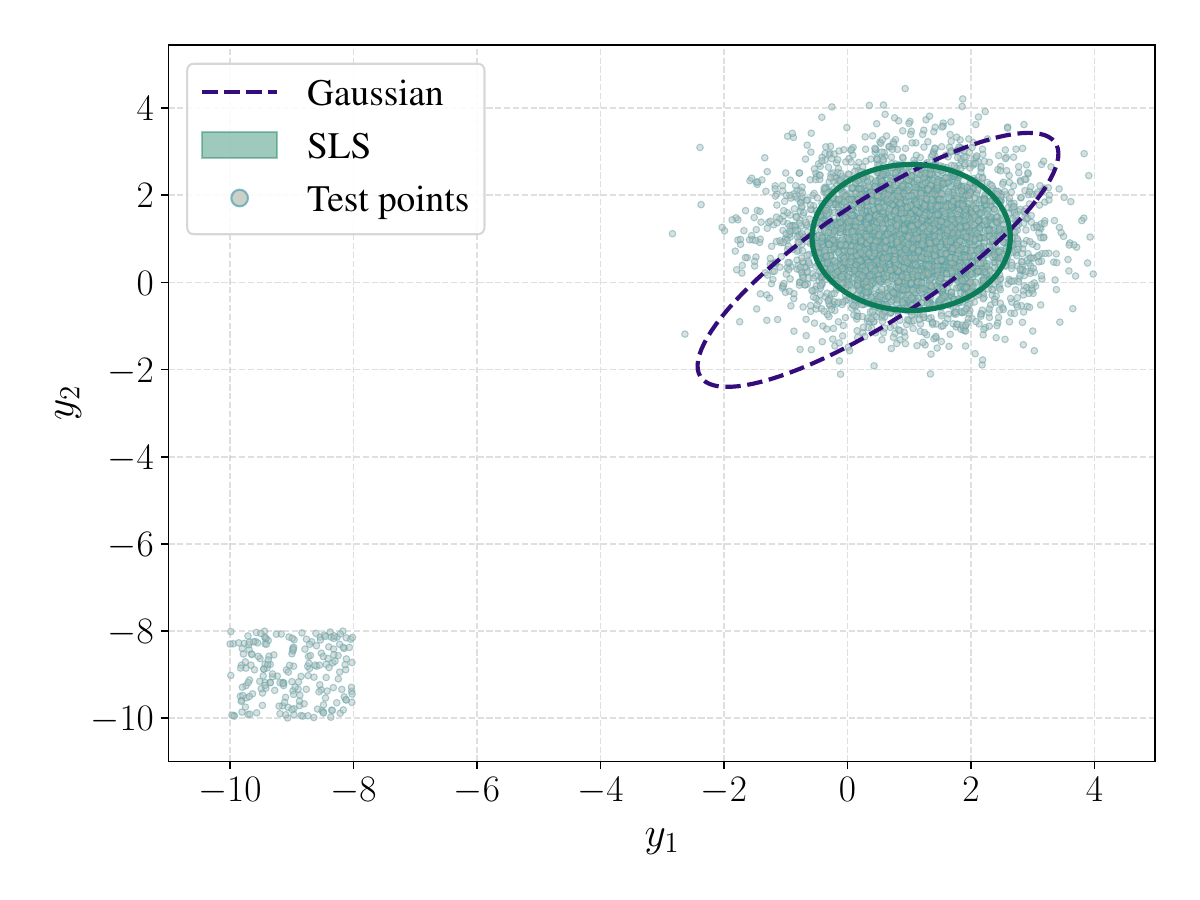}} ~
\subfigure{\includegraphics[width=0.48\columnwidth]{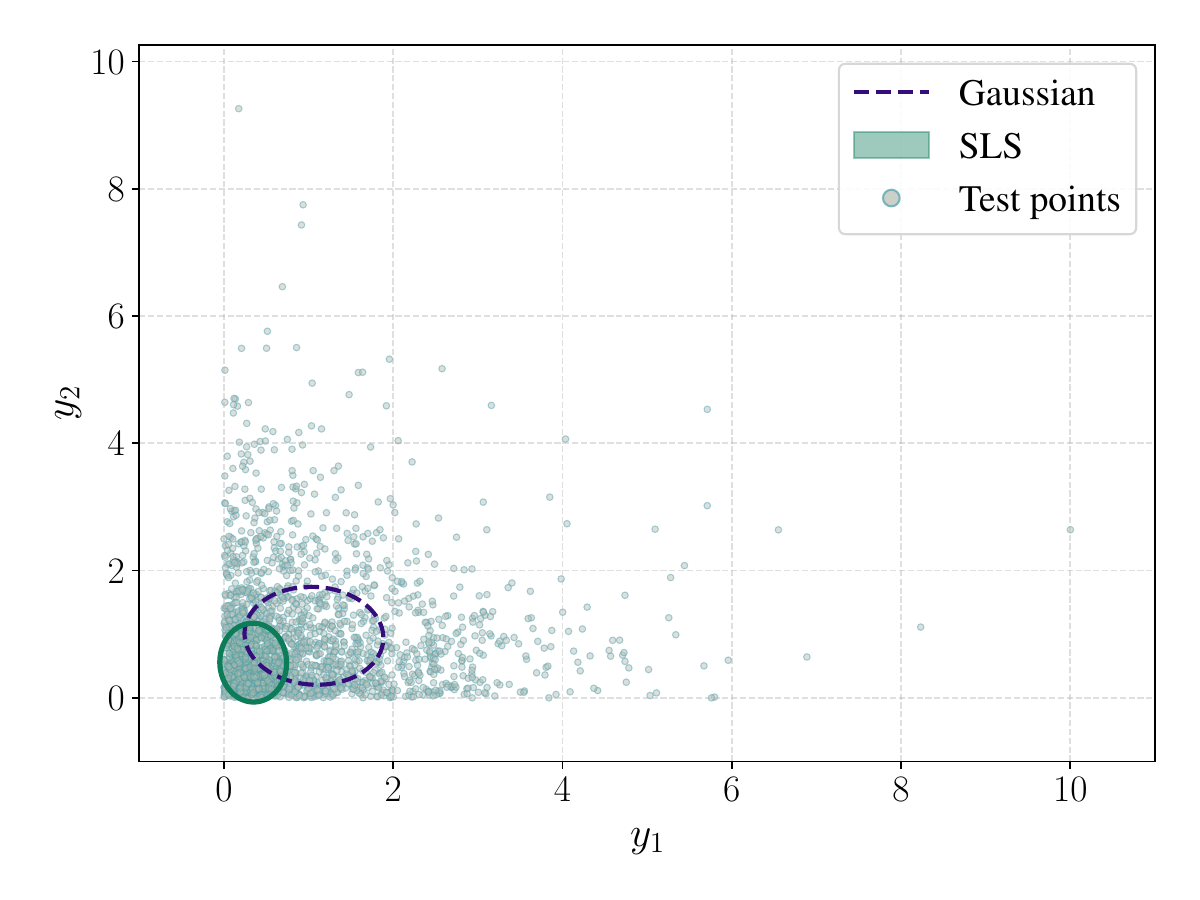}}
\vspace{-5mm}
\caption{\textbf{SLS regression on synthetic 2D distributions for a fixed covariate $X$.} All samples are drawn from the true conditional distribution $\mathbb{P}_{Y|X}$. We compare our approach which uses a single-flow frontier fixed to the identity mapping to enforce rigid ellipsoidal sets, against a standard baseline that minimizes the negative log-likelihood (NLL) of a Gaussian distribution. \textbf{Left:} A Gaussian distribution contaminated with $5\%$ uniform outliers. For a target coverage of $70\%$, our shrinking window method successfully ignores the outliers, whereas the NLL baseline is severely distorted by them. \textbf{Right:} An asymmetric exponential distribution, illustrating a misspecified setting where the true highest density regions are not elliptical. Despite this structural restriction, for a target coverage of $30\%$, our approach extracts a significantly smaller and tighter ellipsoid achieving the desired probability mass compared to the Gaussian baseline.}
\label{fig:comparison:gaussian}
\end{figure}

\subsection{Union of normalizing flows}
\label{app:subsec:union:flow}

To prevent trivial volume collapses, we constrain each shape matrix such that $\det(L_{\theta_k}(X)) = 1$. Under this constraint alone, the resulting predictive region would be restricted to a union of equally sized volumetric shapes. To represent the geometric union of these $K$ flow-transformed ellipsoids while allowing for varying local volumes, we introduce conditional mixture weights $p_k(X)$ satisfying $p_k(X) > 0$ and $\sum_{k=1}^K p_k(X) = 1$. 

We define the individual frontier for the $k$-th component by scaling its latent squared Mahalanobis distance by $p_k(X)^{-2/d}$, where $d$ is the dimensionality of $\mathcal{Y}$:
\begin{equation*}
G_k(X, Y) = \Big\| p_k(X)^{-\frac{1}{d}} L_{\theta_k}(X) \big(T_{\phi_k}(Y; X) - \mu_{\theta_k}(X)\big) \Big\|_2^2\, .
\end{equation*}

This score is equivalent to \Cref{eq:softmin_frontier}, up to the update $L_{\theta_k}(X) \xleftarrow{} p_k(X)^{-\frac{1}{d}} L_{\theta_k}(X)$. This parameterization effectively decouples the global volume allocation from the local geometric shapes dictated by $L_{\theta_k}(X)$. In practice, to prevent premature volume collapse of any single component during early training, we initially freeze the mixture weights uniformly at $p_k(X) = 1/K$ for a set number of epochs. Once the individual flows have stabilized, we unfreeze the weights, allowing the model to dynamically distribute volume across the learned shapes.

\section{Conformalizing the level sets}
\label{app:conformalizing}

In practice, establishing distribution-free finite-sample guarantees for exact conditional coverage is fundamentally impossible without imposing strong assumptions on the underlying data distribution or resorting to trivial prediction sets \citep{foygel2021limits}. While our methodology targets conditional adaptivity by learning a single conditional quantile, a statistically more tractable task than estimating the full conditional distribution, this optimization alone cannot strictly guarantee conditional coverage. To provide rigorous statistical validity, we apply a post-hoc corrective procedure to guarantee marginal coverage \citep{papadopoulos2002inductive}. Specifically, we leverage a split conformal framework \citep{shafer2008tutorial} using the normalized scores $G(X,Y)/q_{\tau}(X)$. This leads to prediction sets of the form \mbox{$C_\tau(X)=\{y \mid G(X,y)\leq r \cdot q_{\tau}(X)\}$}, where $r$ ensures marginal coverage guarantees.

In practice, our method can also be deployed without conformal calibration when the estimator $q_{\tau}$ is sufficiently well learned and exhibits good empirical calibration. In such cases, the learned quantile alone may provide strong conditional adaptivity. The conformal correction serves as a reliable safeguard, acting as a post-hoc adjustment that guarantees marginal coverage even when $q_{\tau}$ is imperfectly estimated.

\section{Architecture}
\label{app:architecture}

In this section, we provide the architectural details of the neural network components used in SLS regression, including the volume-preserving flow, the shape matrix parameterization, the quantile networks, and the shrinking window scheduling.

\subsection{Frontier function architecture}

The frontier function requires parameterizing both a volume-preserving mapping and a localized shape matrix (precision matrix) with a strictly unit determinant.

\paragraph{Conditional volume-preserving flow.}
The flow $T_\phi(Y; X)$ is constructed by composing multiple conditional additive coupling layers. At each layer, the input vector $y \in \mathbb{R}^d$ is partitioned into two components, $y_1$ and $y_2$. To ensure volume preservation (a Jacobian determinant of $1$), we apply an affine transformation where only the translation is parameterized by a neural network (see, e.g., \citealt{dinh2014nice}). Specifically, the forward pass for a single layer is defined as:
\begin{align*}
    z_1 &= y_1 \\
    z_2 &= y_2 + \mathcal{F}_\phi(y_1, X) - \mathcal{F}_\phi(\mathbf{0}, X),
\end{align*}
where $\mathbf{0}$ is a zero vector of the same dimension as $y_1$. Subtracting the component evaluated at the origin acts as a dynamic bias correction. The shift function $\mathcal{F}_\phi$ is parameterized by a multi-layer perceptron (MLP). To stabilize training and prevent overfitting, this MLP utilizes layer normalization \citep{ba2016layer} and dropout \citep{srivastava2014dropout} between its hidden layers, followed by ReLU activations. The final linear projection is initialized with weights and biases set strictly to zero, ensuring that the flow acts as an identity mapping at the beginning of training. Successive layers alternate the masking split so that all dimensions are eventually transformed.

\paragraph{Precision matrix parameterization.}
The shape matrix $L_\theta(X)$ dictates the geometry of the Mahalanobis frontier. We explore two parameterization modes to scale effectively with the dimensionality $d$ of the response space:
\begin{itemize}
    \item \textbf{Full rank:} For moderate dimensions ($d < 5$), we predict all elements of the lower triangular matrix $L_\theta(\cdot)$. The diagonal elements of $L$ are constrained to be strictly positive using a softplus activation. 
    \item \textbf{Low-rank plus diagonal:} For higher dimensions, learning all parameters becomes computationally prohibitive and prone to overfitting. Instead, we use a low-rank approximation, defining the precision matrix as $L^\top L = D + V V^\top$, following the implementation of \cite{braun2025multivariate} for a Gaussian model. Here, $D$ is a positive diagonal matrix and $V \in \mathbb{R}^{d \times r}$ is a factor matrix with rank $r = \lceil \sqrt{d} \rceil$. The Mahalanobis distance in \Cref{eq:flow_frontier} is evaluated directly as the quadratic form $z^\top (D + V V^\top)z$ and the volume surrogate is computed using $\sqrt{\det(D+VV^\top)}$.
\end{itemize}

\subsection{Quantile network}
The conditional quantile estimator $q_\omega(X)$ must simultaneously output the lower bound, the target, and the upper bound of the shrinking window. To encourage representation sharing and reduce computational overhead, we use a single shared MLP backbone. The backbone maps the covariates $X$ to a hidden representation, which is then passed to three independent linear heads: one for $\tau - \phi(n)$, one for the target $\tau$, and one for $\tau + \psi(n)$. Because the frontier function yields strictly positive distance metrics, the outputs of these linear heads are exponentiated (and shifted by a small $\epsilon = 10^{-6}$) to ensure the predicted quantiles are always strictly positive. We then rank the quantile predictions for each feature $X$ to avoid quantile crossing and ensure consistency.

\section{Training strategy}
\label{app:sec:training:strategy}
\subsection{Shrinking window scheduling}
The optimization of the volume surrogate relies heavily on the design of the shrinking margin sequences $\phi(n)$ and $\psi(n)$. In our implementation, we parameterize these boundaries using an annealed logistic schedule. Following an initial warm-start phase of $n_0$ steps where the full volume is penalized, the window begins to shrink from an initial wide margin down to a tight minimum margin. 

Let $t = n - n_0$ be the number of steps post-warm-up. The lower margin function is parameterized as:
\begin{equation*}
    \phi(n) = \text{error}_{\text{init}} + (\text{error}_{\text{min}} - \text{error}_{\text{init}}) \cdot \left( \frac{\sigma(k \cdot (t - t_0)) - \sigma(-k \cdot t_0)}{1 - \sigma(-k \cdot t_0)} \right),
\end{equation*}
where $\sigma(z) = (1 + \exp(-z))^{-1}$ is the sigmoid function, $k$ controls the steepness of the decay, and $t_0$ is the target step center. This normalized sigmoidal decay ensures a smooth, differentiable transition from the exploratory warm-start phase to the strict exploitation of the target quantile. An identical functional form (with independent hyperparameters) is used for the upper bound schedule $\psi(n)$.

We do not force the decay schedules to shrink to exactly zero (i.e., $\phi(n) \to 0$ and $\psi(n) \to 0$). This relaxation of the initial objective is motivated by improved robustness of the learning procedure. If the window were perfectly thin, minor estimation errors in the quantile networks would cause the model to optimize an incorrect threshold. Furthermore, in a practical finite-sample setting, a zero-width window would result in exploding gradients for the data points that would fall within the targeted neighborhood. Instead, we constrain the decay schedules to converge to strictly positive minimum margins, $\phi(n) \to \varepsilon_{\text{low}}$ and $\psi(n) \to \varepsilon_{\text{high}}$. This results in the minimization of the expected volume integrated over a narrow probability band:
\begin{equation*}
\mathbb{E}_X\left[ \int_{\tau -\varepsilon_{\text{low}}}^{\tau+\varepsilon_{\text{high}}} \Vol_G(g, X) \, dF_{G|X}(g) \right],
\end{equation*}
which effectively performs a uniform volume minimization across the localized quantile range $[\tau -\varepsilon_{\text{low}}, \tau + \varepsilon_{\text{high}}]$. 

\subsection{Early stopping}
Because the surrogate objective in \Cref{eq:kn_reformulation} explicitly depends on the training step via the shrinking window, it cannot be reliably used as a criterion for early stopping. Instead, we select the model checkpoint that yields the minimum average volume on a hold-out validation set while satisfying the target marginal coverage. To ensure a fair volumetric comparison across epochs, we first apply a scalar calibration $q(X) \mapsto r \cdot q_\tau(X)$, where $r$ is dynamically computed on the validation set to strictly guarantee marginal coverage, and subsequently measure the resulting adjusted volume.

\subsection{Learning the $\tau$-th quantile}
While our alternating optimization scheme effectively aligns the geometric boundary with the target conditional quantile, the quantile network $q_\tau(X)$ is key to improving conditional coverage. To yield the most accurate final conditional highest density regions, once $G$ has been learned, we perform a learning step using TabICL \citep{qu2026tabiclv2} on the training set. By leveraging TabICL on a the training set, we refine the final boundary threshold, ensuring a highly accurate estimate of the conditional quantile before extracting the final level set. To obtain a more accurate estimate of the true conditional quantile, one could instead learn this quantity on a held-out dataset. However, to avoid introducing additional methodological complexity and to keep the framework streamlined, we do not adopt this approach. As a consequence, the frontier function $G$ may exhibit some degree of overfitting to the training data.

\section{Baselines}
\label{app:baselines}

Among our baseline choices, we choose all strategies targeting conditional coverage from \cite[Table 1]{dheur2025unified} in multivariate conformal prediction. The base model for conditional density estimation minimizes the negative log-likelihood of a multivariate quantile function forecaster \citep{kan2022multivariate}, utilizing input convex neural networks \citep{brandon2017input} as implemented by \citet{dheur2025unified}.
\begin{itemize}
\item \textbf{C-HDR} \citep{izbicki2022cd}: Conformalizes the highest predictive density (HPD) by using the nonconformity score $S_\text{HDR}(X, Y) =  \P_{y \sim \hat{p}(\cdot|X)}\left( \; \hat{p}(y|X) \geq \hat{p}(Y|X) \; \right)$. It then produces regions \mbox{$ C_{\text{C-HDR}}(X) = \{y : \hat p(y|X) \ge \hat t_q\}$}, where \( \hat t_q \) defines the highest density region (HDR) at level \( \hat q \).

\item \textbf{CP2-PCP} \citep{plassier2024probabilistic}: Builds predictive sets by using samples from an implicit conditional generative model. For each calibration point it uses two independent draws from the conditional generator to define a conformity score and an inflation parameter $\tau$ that accounts for the conditional mass around likely outputs. At prediction time it forms a union of balls around new generated samples, with their size chosen to guarantee marginal validity while improving approximate conditional adaptivity. 

\item \textbf{L-CP} \citep{dheur2025unified}: Defines conformity in a latent space using an invertible conditional generative model \( \hat Q: \mathcal{Z} \times \mathcal{X} \to \mathcal{Y} \). A latent variable \( Z \sim \mathcal{N}(0, I_d) \) is mapped to the output space via \( \hat Q \), and the conformity score is measured in latent space as
\[
S_{\text{L-CP}}(X,Y) = \| \hat Q^{-1}(Y; X) \|.
\]
The prediction region is obtained by taking a ball of radius \( \hat q \) around the origin in latent space and mapping it back to the output space. This method avoids grid-based directional quantile regression, improving scalability and computational efficiency, and generalizes distributional conformal prediction to multivariate outputs.

\item \textbf{C-PCP} \citep{dheur2025unified}: Estimates the conditional CDF of the conformity score \( S(X,Y) \),
\[
S_{\text{CDF}}(x,y) = \P(S_W(X,Y) \le S_W(x,y) \mid X = x),
\]
using a Monte Carlo approximation with $K$ samples
\[
S_{\text{ECDF}}(x,y) = \frac{1}{K} \sum_{k =1}^K \mathds{1}[S_W(x,\hat Y^{(k)}) \le S_W(x,y)],
\quad \hat Y^{(k)} \sim \hat F_{Y|x}.
\]
When \( S(x,y) = S_{\text{PCP}}(x,y) \), this yields
\[
S_{\text{C-PCP}}(x,y) = \frac{1}{K} \sum_{k \in [K]} 
1\!\left\{
\min_{l \in [L]} \|\hat Y^{(k)} - \tilde Y^{(l)}\|_2 \le
\min_{l \in [L]} \|y - \tilde Y^{(l)}\|_2
\right\}.
\]

\item \textbf{PCP} \citep{wang2023probabilistic}: Draws \( L \) samples \( \tilde Y^{(l)} \sim \hat p_{Y|x} \) with $\hat p_{Y|x}$ the estimated conditional distribution, and defines conformity as the distance to the nearest sample, \mbox{\( S_{\text{PCP}}(X,Y) = \min_{l\in[L]} \|Y - \tilde Y^{(l)}\|_2 \)}; the corresponding region is a union of \( L \) balls centered at the sampled points.

\end{itemize}

\section{Additional experiments}
\label{app:additional:experiments}

\paragraph{Hardware.} We ran the models on GPUs (NVIDIA V100).

\subsection{Minimizing alternative functions of the conditional quantile}
\label{app:subsec:function:of:quantile}

To demonstrate the generality of our framework, we evaluate its ability to minimize functions of the conditional quantile other than the induced volume. Specifically, following the experimental setup from \Cref{sec:experiments}, we adapt our approach to minimize the expected conditional median absolute deviation:
\begin{equation}
\label{eq:median_example}
\inf_{f:\mathcal{X} \to \mathbb{R}} \mathbb{E}_{X}\Big[ \operatorname{median}_{Y\sim \P_{Y|X}}\big(|Y-f(X)|\big) \Big]\,.
\end{equation}

We compare our method against two alternatives: (1) a naive batch approach that optimizes the objective solely over the data points responsible for the empirical median, and (2) a Conditional Value-at-Risk (CVaR) approach, which uses a bi-level optimization similar to ours but replaces the shrinking window with a strict tail indicator, $K(X, G(X,Y)) = \mathbbm{1}_{\{G(X,Y) \geq q(X)\}}$. Because the true data-generating process is known in this synthetic setup, we also compute and display the theoretically optimal oracle strategy. Results are available in \Cref{fig:median_comparison}. 
These alternative are not proper baselines for this task, since they were not designed for it, but are just current alternatives (in particular, the goal of CVaR is to minimize the error on the outliers, which explains why it is not well adapted for this task).

Overall, our approach successfully and accurately estimates the true conditional median absolute deviation. In contrast, the CVaR baseline struggles in this asymmetric noise setting, as its tail-focused objective disproportionately penalizes distances to outliers. While the marginal relaxation approach initially appears effective when the data distribution is relatively well-behaved, the marginal nature of its predictions ultimately leads to vanishing gradients in the exact regions where the model makes the largest errors, a limitation clearly visible in the right-hand scenario.

\begin{figure}[h]
\center
\subfigure{\includegraphics[width=0.48\columnwidth]{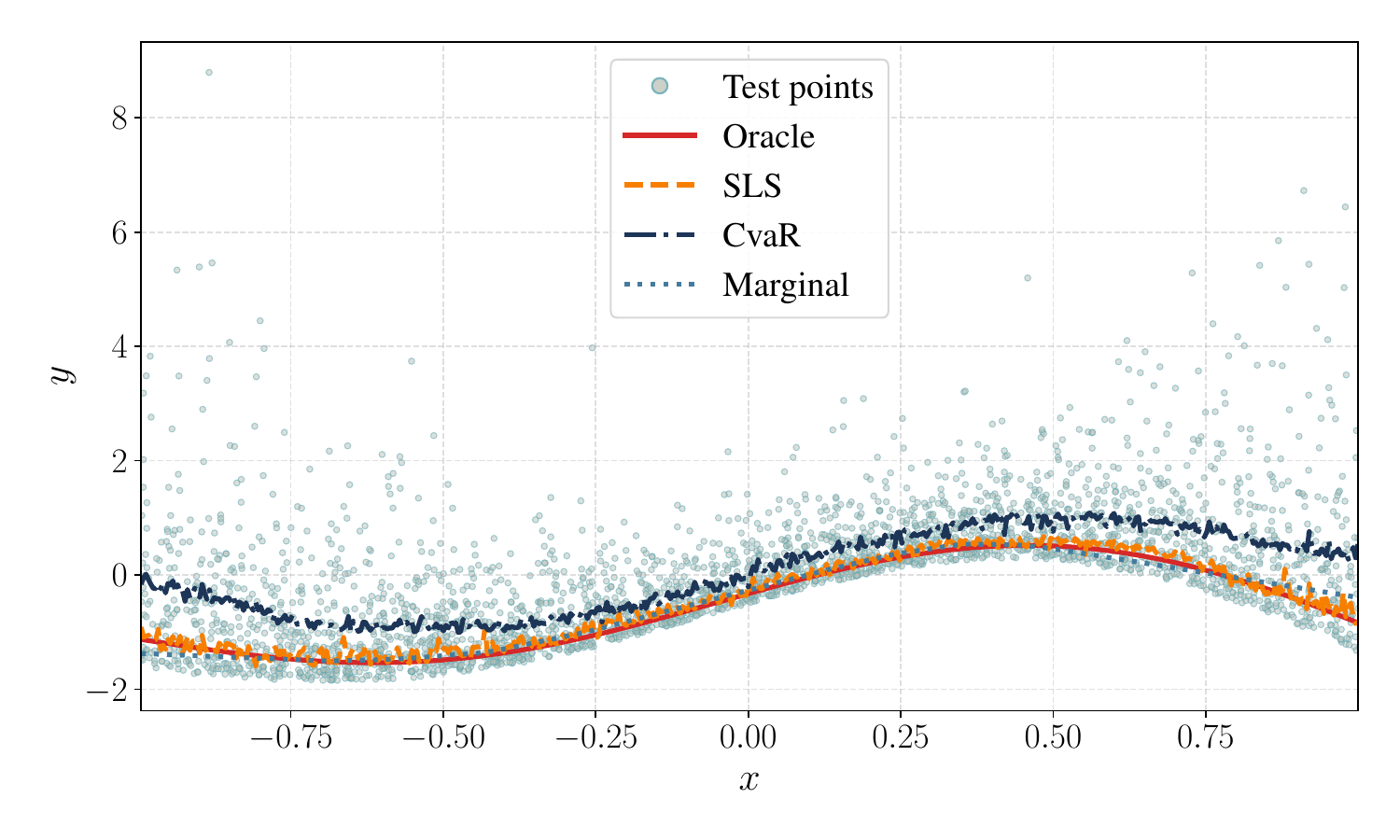}} ~
\subfigure{\includegraphics[width=0.48\columnwidth]{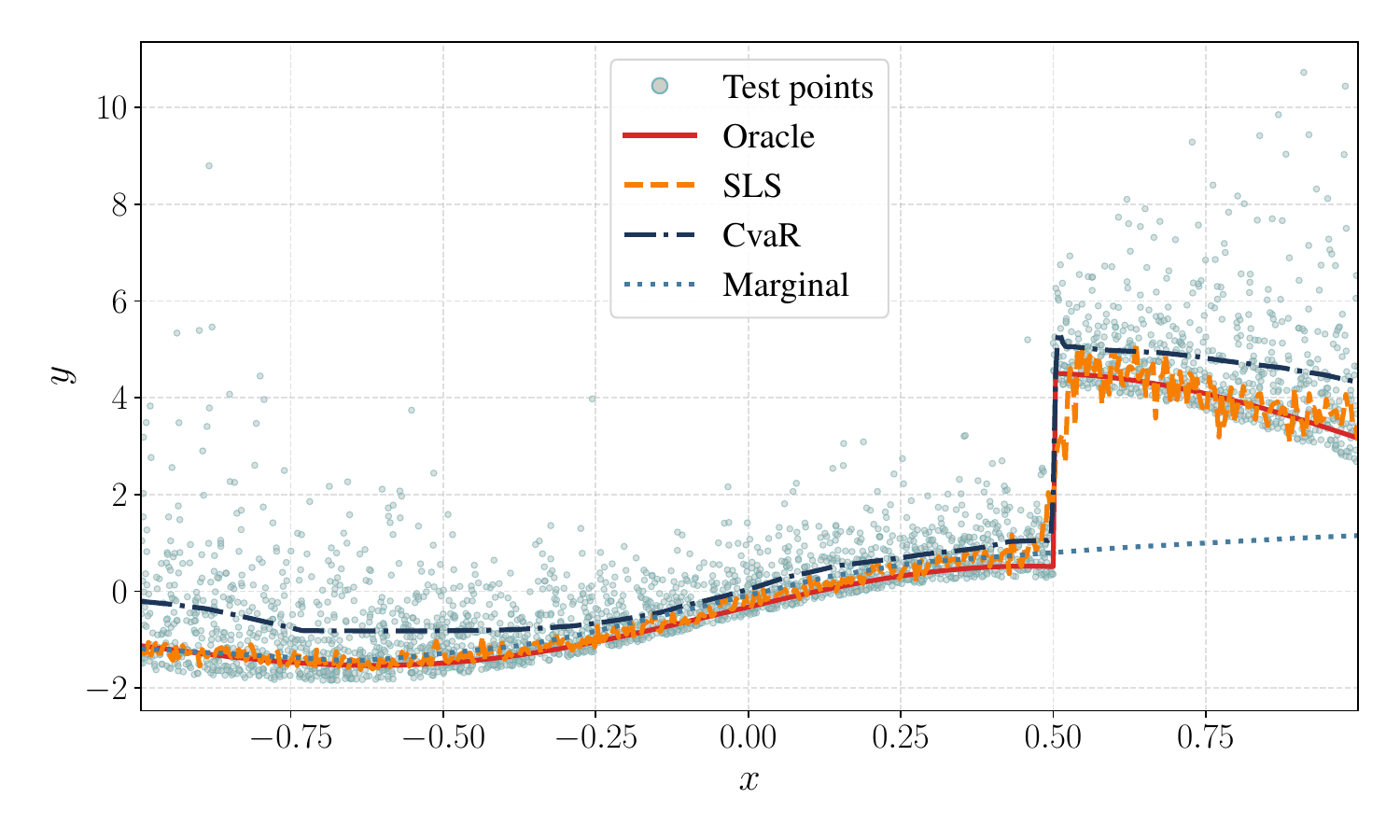}}
\vspace{-5mm}
\caption{\textbf{Minimization of the conditional median absolute deviation (\Cref{eq:median_example}).} The data are generated as $Y = f(X) + r(X)W$, where $W \sim \mathcal{E}(1) - 1$, $X \sim \mathcal{U}([-1,1])$, and heteroscedasticity is introduced via $r(X) = X^2 + 0.5$. \textbf{Left:} Continuous base trend with $f(X) = \sin(3X)$. \textbf{Right:} Discontinuous base trend with $f(X) = \sin(3X) + 4\mathbbm{1}_{\{X>0.5\}}$.}
\label{fig:median_comparison}
\end{figure}

\subsection{Real data}

\paragraph{Dataset information.}

Data are pre-processed using a quantile transformation on both the features ($X$) and responses ($Y$) using \texttt{scikit-learn} \citep{pedregosa2011scikit}, ensuring normalization across datasets. See \Cref{tab:dataset:multi:information} for specific information on the datasets.

\begin{table}[h!]
\caption{Description of the multivariate datasets.}
\centering
\begin{tabular}{@{}lccc@{}}
\toprule
Dataset & \shortstack{Number of \\ samples}  & \shortstack{Number \\ of features} & \shortstack{Dimension \\ of targets}  \\
\midrule
Bias \citep{cho2020comparative} & 7752 & 22 & 2  \\ \hline
CASP \citep{rana2013physicochemical} & 45730 & 8 & 2  \\ \hline
House \citep{pace1997sparse} & 21613 & 17 & 2 \\ \hline
rf1 \citep{tsoumakas2011mulan} & 9125 & 64 & 8 \\ \hline
rf2 \citep{tsoumakas2011mulan} & 9125 & 576 & 8 \\ \hline
Taxi \citep{wang2023conformal} & 61286 & 6 & 2 \\ \hline
\bottomrule
\end{tabular}
\label{tab:dataset:multi:information}
\end{table}

\subsection{Results per dataset}

For these experiments, we utilized a single-flow-based score with the model introduced in \Cref{app:architecture} with 3 layers. The multi-flow approach did not yield performance improvements, suggesting that the underlying distributions of these datasets are largely unimodal. The normalizing flows are parameterized using three layers. Regarding the covariance parameterization, we adopted a full-rank structure for datasets with fewer than five outputs ($d < 5$), and a low-rank approximation with rank $r=\lceil \sqrt{d} \rceil$ otherwise, where $d$ is the output dimension. For output dimensions $d\geq5$, we minimize the $\log$ volume instead of the volume. All reported results are averaged across 10 independent trials. To prevent statistical anomalies from skewing the aggregated results, we discard extreme outlier values. Notably, while this instability never occurs with SLS regression, it can manifest in baselines that rely on sampling procedures, such as C-PCP or CP2-PCP.

The performance outcomes are summarized for target coverage rates $\tau = 0.9$ in \Cref{table:results:alpha:0.1} and $\tau = 0.5$ in \Cref{table:results:alpha:0.5}. These tables report two primary metrics: the volume of the prediction sets and the conditional coverage, evaluated via the ERT metric \citep{braun2025conditional} using their default LightGBM classifier. Comparing the efficacy of the different strategies presents an inherent challenge due to the strict trade-off between efficiency and validity: a method achieving a smaller volume often does so at the cost of degraded conditional coverage. Consequently, both the volume and the ERT values must be considered jointly to accurately assess a model's performance.

\begin{table}[htpb]
\centering
\caption{Performance metrics (mean and standard error) per dataset for $\tau=0.9$.}
\resizebox{\textwidth}{!}{
\begin{tabular}{ll|cccccc}
\toprule
Dataset & Metric & SLS & C-HDR & PCP & CP2-PCP & L-CP & C-PCP \\ \midrule
\multirow{3}{*}{biais} & Volume & $\mathbf{1.11}_{{0.02}}$ & $1.34_{{0.03}}$ & $1.35_{{0.01}}$ & $1.42_{{0.02}}$ & $1.35_{{0.03}}$ & $1.47_{{0.02}}$ \\
 & ERT & $0.04_{{0.00}}$ & $0.02_{{0.01}}$ & $\mathbf{0.01}_{{0.01}}$ & $0.04_{{0.00}}$ & $0.03_{{0.01}}$ & $0.02_{{0.00}}$ \\
\midrule
\multirow{3}{*}{rf1} & Volume & $\mathbf{0.36}_{{0.00}}$ & $0.46_{{0.01}}$ & $0.49_{{0.01}}$ & $0.49_{{0.00}}$ & $0.48_{{0.01}}$ & $0.49_{{0.00}}$ \\
 & ERT & $0.04_{{0.00}}$ & $0.05_{{0.00}}$ & $0.09_{{0.00}}$ & $0.07_{{0.00}}$ & $0.05_{{0.01}}$ & $\mathbf{0.04}_{{0.00}}$ \\
\midrule
\multirow{3}{*}{rf2} & Volume & $0.47_{{0.01}}$ & $0.44_{{0.00}}$ & $0.44_{{0.01}}$ & $0.98_{{0.53}}$ & $\mathbf{0.43}_{{0.01}}$ & $0.44_{{0.01}}$ \\
 & ERT & $0.05_{{0.00}}$ & $0.05_{{0.00}}$ & $0.07_{{0.01}}$ & $0.07_{{0.00}}$ & $0.05_{{0.00}}$ & $\mathbf{0.04}_{{0.00}}$ \\
\midrule
\multirow{3}{*}{casp} & Volume & $\mathbf{1.15}_{{0.01}}$ & $1.32_{{0.01}}$ & $1.48_{{0.01}}$ & $1.45_{{0.01}}$ & $1.42_{{0.01}}$ & $1.51_{{0.01}}$ \\
 & ERT & $0.02_{{0.00}}$ & $0.02_{{0.00}}$ & $0.02_{{0.00}}$ & $0.04_{{0.01}}$ & $0.02_{{0.00}}$ & $\mathbf{0.01}_{{0.00}}$ \\
\midrule
\multirow{3}{*}{house} & Volume & $1.20_{{0.01}}$ & $\mathbf{1.15}_{{0.01}}$ & $1.22_{{0.01}}$ & $1.42_{{0.15}}$ & $1.27_{{0.01}}$ & $1.80_{{0.51}}$ \\
 & ERT & $0.03_{{0.00}}$ & $0.03_{{0.00}}$ & $0.04_{{0.00}}$ & $0.03_{{0.00}}$ & $0.02_{{0.00}}$ & $\mathbf{0.02}_{{0.00}}$ \\
\midrule
\multirow{3}{*}{taxi} & Volume & $2.77_{{0.02}}$ & $\mathbf{2.65}_{{0.11}}$ & $3.46_{{0.28}}$ & $5.26_{{0.79}}$ & $3.08_{{0.02}}$ & $3.25_{{0.01}}$ \\
 & ERT & $0.01_{{0.00}}$ & $0.01_{{0.00}}$ & $0.01_{{0.00}}$ & $0.03_{{0.02}}$ & $\mathbf{0.01}_{{0.00}}$ & $0.01_{{0.00}}$ \\
 \bottomrule
\end{tabular}
}
\label{table:results:alpha:0.1}
\end{table}

\begin{table}[htpb]
\centering
\caption{Performance metrics (mean and standard error) per dataset for $\tau=0.5$.}
\resizebox{\textwidth}{!}{
\begin{tabular}{ll|cccccc}
\toprule
Dataset & Metric & SLS & C-HDR & PCP & CP2-PCP & L-CP & C-PCP \\ \midrule
\multirow{3}{*}{bias} & Volume & $\mathbf{0.59}_{{0.01}}$ & $0.69_{{0.01}}$ & $0.77_{{0.01}}$ & $0.79_{{0.01}}$ & $0.70_{{0.01}}$ & $0.80_{{0.01}}$ \\
 & ERT & $0.06_{{0.00}}$ & $0.07_{{0.00}}$ & $0.08_{{0.00}}$ & $\mathbf{0.04}_{{0.01}}$ & $0.07_{{0.01}}$ & $0.05_{{0.01}}$ \\
\midrule
\multirow{3}{*}{rf1} & Volume & $\mathbf{0.26}_{{0.00}}$ & $0.29_{{0.00}}$ & $0.32_{{0.01}}$ & $0.36_{{0.00}}$ & $0.31_{{0.00}}$ & $0.36_{{0.00}}$ \\
 & ERT & $\mathbf{0.07}_{{0.00}}$ & $0.22_{{0.01}}$ & $0.29_{{0.00}}$ & $0.12_{{0.00}}$ & $0.22_{{0.00}}$ & $0.13_{{0.01}}$ \\
\midrule
\multirow{3}{*}{rf2} & Volume & $0.35_{{0.01}}$ & $\mathbf{0.28}_{{0.00}}$ & $0.30_{{0.01}}$ & $0.33_{{0.01}}$ & $0.28_{{0.01}}$ & $0.33_{{0.01}}$ \\
 & ERT & $\mathbf{0.08}_{{0.00}}$ & $0.20_{{0.01}}$ & $0.26_{{0.01}}$ & $0.12_{{0.01}}$ & $0.20_{{0.00}}$ & $0.12_{{0.01}}$ \\
\midrule
\multirow{3}{*}{house} & Volume & $0.66_{{0.01}}$ & $\mathbf{0.52}_{{0.00}}$ & $0.63_{{0.01}}$ & $0.68_{{0.01}}$ & $0.63_{{0.01}}$ & $0.68_{{0.01}}$ \\
 & ERT & $0.05_{{0.00}}$ & $0.09_{{0.00}}$ & $0.13_{{0.00}}$ & $0.04_{{0.00}}$ & $0.08_{{0.00}}$ & $\mathbf{0.04}_{{0.00}}$ \\
\midrule
\multirow{3}{*}{casp} & Volume & $0.60_{{0.00}}$ & $\mathbf{0.47}_{{0.06}}$ & $0.63_{{0.06}}$ & $0.72_{{0.01}}$ & $0.56_{{0.06}}$ & $0.80_{{0.14}}$ \\
 & ERT & $\mathbf{0.03}_{{0.00}}$ & $0.11_{{0.00}}$ & $0.15_{{0.00}}$ & $0.05_{{0.00}}$ & $0.10_{{0.00}}$ & $0.14_{{0.06}}$ \\
\midrule
\multirow{3}{*}{taxi} & Volume & $1.32_{{0.01}}$ & $\mathbf{1.27}_{{0.01}}$ & $1.50_{{0.10}}$ & $1.58_{{0.13}}$ & $1.41_{{0.15}}$ & $1.56_{{0.14}}$ \\
 & ERT & $0.02_{{0.00}}$ & $0.21_{{0.12}}$ & $0.20_{{0.08}}$ & $\mathbf{0.01}_{{0.00}}$ & $0.03_{{0.00}}$ & $0.33_{{0.11}}$ \\
\bottomrule
\end{tabular}
}
\label{table:results:alpha:0.5}
\end{table}

\end{appendices}

\end{document}